\setlist{leftmargin=5mm}    
\newcommand{\Note}[1]{}
\renewcommand{\Note}[1]{\hl{[#1]}}
\newcommand{\R}{\mathbb{R}}
\newtheorem{othertheorem}{othertheorem}[section]
\newtheorem{lemma}[othertheorem]{Lemma}
\newtheorem{claim}[othertheorem]{Claim}
\theoremstyle{definition}
\newtheorem{definition}[othertheorem]{Definition}
\newtheorem{remark}[othertheorem]{Remark}
\theoremstyle{definition}
\newcommand{\eff}{{\rm eff}}
\newcommand{\dip}{{\rm dp}}
\newcommand{\E}{\mathbb{E}}
\newcommand{\Var}{\mathrm{Var}}
\newcommand{\Comment}[1]{\hfill {\(\triangleright\) #1}}
\colorlet{darkgreen}{green!65!black}
\colorlet{darkblue}{blue!75!black}
\colorlet{darkred}{red!80!black}
\definecolor{lightblue}{HTML}{0071bc}
\definecolor{lightgreen}{HTML}{39b54a}
\definecolor{shilv}{HTML}{16A951}
\newcommand{\g}{\bm{g}}
\newcommand{\var}{\operatorname{Var}}
\theoremstyle{plain}
\icmltitlerunning{Delving into Differentially Private Transformer}
\begin{document}

\twocolumn[
\icmltitle{Delving into Differentially Private Transformer}



\icmlsetsymbol{equal}{*}

\begin{icmlauthorlist}
\icmlauthor{Youlong Ding}{szu,huji}
\icmlauthor{Xueyang Wu}{hkust}
\icmlauthor{Yining Meng}{comp}
\icmlauthor{Yonggang Luo}{comp}
\icmlauthor{Hao Wang}{ru}
\icmlauthor{Weike Pan}{szu}
\end{icmlauthorlist}
\icmlaffiliation{huji}{The Hebrew University of Jerusalem, Jerusalem, Israel}
\icmlaffiliation{ru}{Rutgers University, New Jersey, USA}
\icmlaffiliation{szu}{College of Computer Science and Software Engineering, Shenzhen University, Shenzhen, China}
\icmlaffiliation{hkust}{Hong Kong University of Science and Technology, Hong Kong SAR, China}
\icmlaffiliation{comp}{Changan Automobile; Changan Technology Co., Ltd, Chongqing, China}

\icmlcorrespondingauthor{Weike Pan}{panweike@szu.edu.cn}

\icmlkeywords{Machine Learning, ICML}

\vskip 0.3in
]



\printAffiliationsAndNotice{}  

\begin{abstract}
Deep learning with differential privacy (DP) has garnered significant attention over the past years, leading to the development of numerous methods aimed at enhancing model accuracy and training efficiency. This paper delves into the problem of training Transformer models with differential privacy. Our treatment is modular: the logic is to `reduce' the problem of training DP Transformer to the more basic problem of training DP vanilla neural nets. The latter is better understood and amenable to many model-agnostic methods. Such `reduction' is done by first identifying the hardness unique to DP Transformer training: the attention distraction phenomenon and a lack of compatibility with existing techniques for efficient gradient clipping. To deal with these two issues, we propose the Re-Attention Mechanism and Phantom Clipping, respectively. We believe that our work not only casts new light on training DP Transformers but also promotes a modular treatment to advance research in the field of differentially private deep learning.
\end{abstract}

\section{Introduction}
Differential privacy~\cite{dwork2006calibrating,dwork2014algorithmic} has been the gold standard for quantitative and rigorous reasoning about privacy leakage from the processing of private data. Applying differential privacy to machine learning~\cite{song2013stochastic,bassily2014private,AbadiCGMMT016} is a long-lasting challenge due to the dramatic reduction in the model utility compared with the non-private version. 
This motivates numerous studies dedicated to designing private learning algorithms without sacrificing too much utility~\cite{TramerB21,sajadmanesh2021locally,DBLP:conf/ccs/KolluriBHS22,10179409}. Meanwhile, the Transformer model~\cite{vaswani2017attention} has emerged as a versatile and effective architecture with broad applications. This paper delves into the problem of training Transformer models with differential privacy.

\begin{figure}[ht]
\centering
\setlength{\abovecaptionskip}{0.cm}
\includegraphics[width=5cm]{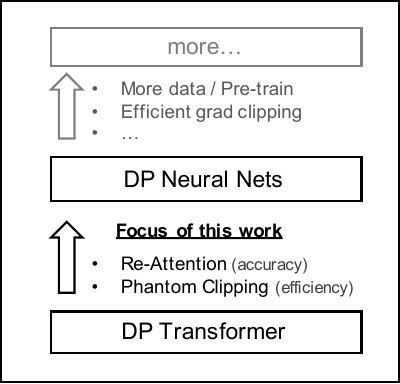}

\caption{The modular treatment in this work, where the focus of this work is on the first `reduction'.
}
\label{fig:modular}
\vspace{-0.2cm}
\end{figure}
\textbf{Hardness of training DP neural nets.}
Let us first take a step back to consider the hardness of training vanilla neural nets with differential privacy in general\footnote{For concreteness, throughout this paper, the reader can think of `vanilla neural nets' as models like MLPs.}, which encapsulates common challenges across neural network architectures.
The empirical hardness of training neural nets with differential privacy is indicated by a large body of recent works, which rely heavily on the use of large pre-trained models~\cite{li2022large} or extensive training data~\cite{de2022unlocking}. 
On the theoretical side, such hardness is directly seen and aligns well with the empirical results. Put informally, differentially private machine learning, based on its sample complexity~\citep{dwork2009complexity}, necessitates a \emph{sufficiently} larger (than non-private version) volume of data to learn patterns without resorting to the memorization of individual data points~\citep{carlini2019secret,feldman2020does}. 

While the paradigm of non-private pre-training and DP finetuning seems promising in the terms of model accuracy, particularly in domains such as image classification~\citep{tramerdifferentially,golatkar2022mixed,de2022unlocking} and natural language processing~\citep{yu2022differentially,li2022large,he2023exploring}.
Relating it to the theoretical hardness mentioned above, it serves as a method to overcome the increased sample complexity imposed by differential privacy. More specifically, collecting more training data is explicit, while pre-training is implicit in that the large amount of data is absorbed into the pre-trained weights.

However, we believe that such paradigm is not likely to be a holy grail solution. 
We refer the reader to the recent criticism against this paradigm~\cite{tramer2022considerations}. 
We also stress that it does not satisfy differential privacy (at least with respect to the public data\footnote{It is not provable that their methods provide a DP guarantee to the private dataset unless making additional (potentially strong) assumptions with the flavor like: the private dataset and public dataset are (at least computationally) independent. The contradiction is that if they were, we would not hope to gain anything by utilizing them (efficiently).}). In fact, if we look back at the history of the development of privacy protection techniques, it turns out there are many heuristic methods which were once believed to be privacy-preserving but later proved to violate the privacy in some unexpected way. One prime example is the notable deanonymization of the Netflix dataset~\cite{narayanan2008robust,narayanan2009anonymizing}.
Besides that, we also argue that the less idealized scenario, where no large-scale pre-trained model/training data is available, happens often in reality, which prevents this paradigm coming into play in the first place. One example is the need for privacy-preserving commercial recommender systems, where a machine learning model is trained differentially privately on users' historical interactions for sequential prediction of users' behaviors. It is clear that, for such domain specific and non-linguistic task, large-scale public datasets or pre-existing pre-trained models do not trivially exist. 

Given the above considerations, we argue that the paradigm of non-private pre-training and DP finetuning should be used only as a last resort. 
This motivates us to dive deeper into differentially private deep learning and study it in a more fine-grained manner.
In this work, we restrict our attention to training Transformers~\cite{vaswani2017attention} with differential privacy.
More concretely, we will focus on the most canonical case, where the model takes as input a sequence of discrete tokens and outputs the prediction for the next token.

Different from most of previous works, our treatment is \emph{modular}. As illustrated in \Cref{fig:modular}, the overall logic is to first `reduce' such specific problem as training DP Transformer models to the more basic problem of training vanilla neural nets with DP, \emph{if there exists a gap between them}. Our main focus of this work is thus on this `reduction'.
Note that this is useful because the reduced problem, improving differentially private deep learning model-agnostically, is a better understood one, amenable to a variety of off-the-shelf model-agnostic techniques~\cite{asi2021private,shamsabadi2021losing,MohapatraSH0022,li2022private,wei2022dpis,wang2021dplis,park2023differentially}.
We now briefly outline the \emph{gap}, i.e., the additional hardness of training DP Transformers.

\textbf{Hardness of training DP Transformers.} 
When discussing the hardness of training DP Transformers, our focus will exclusively be on the unique challenges not commonly encountered in other models. This particular aspect remains largely unexplored to date.
We show that there is some inherent hardness for training Transformers with differential privacy. 
In particular, we reveal the  \emph{attention distraction} phenomenon, where the attention score, computed in the self-attention mechanism through the forward propagation in each training iteration, will be distorted \emph{maliciously}. The degree of distraction depends on the training data distribution and the privacy level of DP.
Specifically, we show that the greater the imbalance in the training data\footnote{Note that real-world data typically follows a long-tailed distribution, where a small fraction of data pattern occur frequently but the majority of data pattern appear infrequently.} and the higher the level of privacy, the more severe such distraction will be. 
As a consequence, the `learning' process of the neural network will be heavily interfered, leading to suboptimal model accuracy when the training is finished.

The other hardness that we identify lies mainly in the efficiency issue. Recall that per-sample gradient is required to bound the sensitivity of gradient in each training step, which can now be obtained for general neural networks without much overhead due to a line of works~\cite{goodfellow2015efficient,li2022large,bu2023automatic}. This is done by eliminating the need for instantiating per-sample gradient.
However, such technique cannot be applied to the standard Transformer architecture, where the challenge stems from the existence of non-plain feed-forward (and likewise, backward propagation) topology caused by embedding sharing\footnote{Models with shared embedding layer refers to binding the parameter for the input embedding layer and the output embedding layer, which is the standard practice of training transformer or other embedding-based models.}.
A simple and tempting workaround is to adopt a non-standard Transformer by instantiating (different) parameters for each embedding layer. This allows us to directly apply the existing efficient gradient clipping method, but such solution is not satisfactory considering the increased parameter number and potentially worse generalization ability due to the lack of the inductive bias by embedding sharing.

\textbf{Contributions.} Our contributions are three-fold:
\begin{itemize}
    \item Our first contribution is primarily philosophical and methodological.
    We introduce the \emph{modular treatment} for improving deep learning with differential privacy, illustrated in \Cref{fig:modular}. Given the complication of the integration of deep learning and differential privacy, we believe such modular treatment provides a systematic way to design methods that improve differentially private deep learning.
    
    \item For the technical contribution, we reveal the attention distraction phenomenon during the DP training of the Transformer. To mitigate this issue, we then propose the Re-Attention Mechanism.
    The key techniques underlying the Re-Attention Mechanism is inspired by the field of Bayesian deep learning~\cite{BDL,wang2020survey}. Essentially, what is required is to track each layer's output distribution~\cite{wang2016natural} during neural networks' forward propagation. 
    We use techniques from Bayesian deep learning to efficiently approximate the output distribution of each layer by propagating the natural parameters through each layer. We note that the use of such techniques in this work is only made in a black-box manner. Thus it can be replaced by other more sophisticated techniques to serve the goal. 

    \item Our second technical contribution is a small trick for obtaining the per-sample gradient norm without instantiating per-sample gradient, dubbed Phantom Clipping. Our Phantom Clipping generalizes the technique from~\citet{li2022large}, which is inspired by ~\citet{goodfellow2015efficient}. In particular, we support standard Transformer models by providing compatibility with embedding sharing. The consequence is that it we now enjoy the full advantage of efficient gradient clipping (without compromising generalization ability caused by using a non-standard Transformer architecture), just as other vanilla neural nets like MLPs. 
\end{itemize}

\section{Preliminaries}

\begin{definition}
\textbf{$(\varepsilon, \delta)$-Differential Privacy (DP)~\citep{dwork2006calibrating,dwork2014algorithmic}:} A randomized mechanism $\mathcal{M}:\mathcal{D}\rightarrow \mathcal{R}$  satisfies $(\varepsilon, \delta)$-differential privacy if for any two datasets $\mathcal{D}, \mathcal{D}'\in {\rm Domain}(\mathcal{M})$ that differ in one record and for all $S \in {\rm Range(\mathcal{M})}$ it holds that $\operatorname{Pr}(\mathcal{M}(\mathcal{D}) \in \mathcal{S}) \leq e^{\varepsilon} \operatorname{Pr}\left(\mathcal{M}(\mathcal{D}') \in \mathcal{S}\right)+\delta$.
\end{definition}
One desirable property of DP is that it ensures privacy (in terms of $\varepsilon$ and $\delta$) under composition.
Based on this property, DP-SGD~\citep{AbadiCGMMT016} injects calibrated Gaussian noise into model gradients in each training step to achieve differential privacy as follows,
\begin{equation}
    \bm{G} = \frac{1}{B}\sum_{i=1}^B \g_i\cdot \operatorname{Clip_C}(\|\g_i\|) + \sigma_{\dip} \cdot \mathcal{N}\left(0, \mathbf{I}\right),
\label{eq:DPSGD}
\end{equation}
where $G$ is the averaged gradient among the minibatch, $\g_i$ is the gradient of the $i$-th sample in the minibatch of size $B$, $C$ is the clipping norm, $\operatorname{Clip_C}(\|g_i\|)=\operatorname{min}(C/\|g_i\|, 1)$, ensuring that the sensitivity of the averaged gradient $G$ is bounded by $\Delta_G \leq \|g_i\cdot \operatorname{Clip}(\|g_i\|) \| \leq C$. $\sigma_\dip$ is the noise multiplier derived from privacy accounting tools~\citep{balle2018privacy,wang2019subsampled}.


The detailed discussion of related work is in \Cref{apdx:related}.

\section{Phantom Clipping}

In order to bound the sensitivity of individual training sample, DP-SGD (\Cref{eq:DPSGD}) clips the per-sample gradient for each training sample in the minibatch. This can be done (naively) by first computing the per-sample gradient for each sample, followed by a gradient clipping. The downside is that this will incur significant memory overhead since we need to instantiate per-sample gradient for each training sample individually.

The method of clipping per-sample gradient  without instantiating per-sample gradient~\citep{goodfellow2015efficient} has shown considerable efficiency advantage~\citep{li2022large} for Transformer models as compared to other libraries or implementations (for instance, Opacus~\citep{yousefpour2021opacus}, JAX~\citep{subramani2021enabling}). 
By eliminating the need for instantiating per-sample gradient, it allows larger batch size and thus enjoys better parallelism.

Let us first review how this method works. Observe that the computational bottleneck of gradient clipping in \Cref{eq:DPSGD} lies in the calculation of the per-sample gradient norm i.e., $\|g_{i}\|$. As the $L_2$ norm of a vector can be decomposed cross arbitrary dimensions\footnote{For example, $\left\|(a, b, c)\right\| = \left\| (\|a\|, \|[b, c]\|)\right\|$ holds for vectors $a,b,c\in \R^*$ in arbitrary shapes.}, 
the  gradient norm for sample $i$ can be computed as
\begin{equation}
    \|g_{i}\| = \left\|\left(\left\|g_{i}^{(1)}\right\|, \left\|g_{i}^{(2)}\right\|, \dots,\left\|g_{i}^{(l)}\right\|\right)\right\|,
\end{equation}
where $l$ is the total number of layers, $g_{i}^{(j)}$ is the gradient of $j$-th layer of the neural networks for $1\leq j\leq l$.
The next step is to scale the gradient by a factor of $\operatorname{Clip}_C(\|g_{i}\|)$ to bound its sensitivity. This can either be accomplished by re-scaling the loss $\mathcal{L}_i$ through this factor, followed by a second backpropagation~\citep{lee2021scaling}, or by manually scaling the gradient as demonstrated by~\citep{bu2023differentially}.

The existing efficient gradient 
clipping methods that compute $\left\|g_{i}^{(j)}\right\|$ without instantiating $g_{i}^{(j)}$ is based on an implicit assumption that the layer $j$ will not be reused during a single forward propagation. Therefore, the incompatibility with standard Transformer models arises from the fact that the (parameter of) embedding layer will be accessed twice during one run of forward propagation: one for input embedding and the other for output embedding. Such parameter sharing leads to non-plain feed-forward (and backward propagation) topology, which makes existing methods not applicable.

\subsection{Phantom Clipping}
\begin{figure}[ht]
\centering
\setlength{\abovecaptionskip}{0.cm}
\includegraphics[width=7cm]{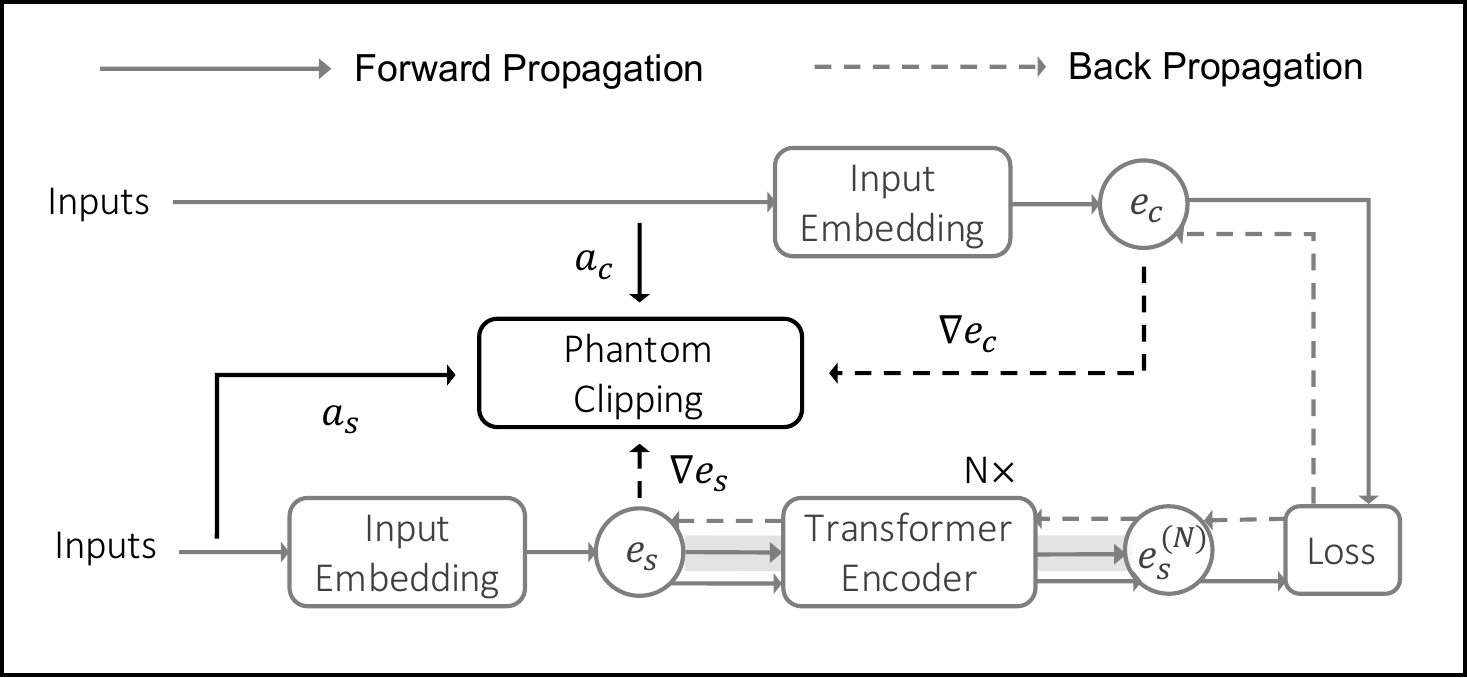}

\caption{Phantom Clipping, illustrated.
}
\label{fig:pc}
\vspace{-0.2cm}
\end{figure}
In this section, we present \emph{Phantom Clipping}, a technique for efficient private training of Transformers without the need for instantiating per-sample gradient. As discussed above, it suffices to enable the efficient gradient norm computation for the shared embedding layer, as other layers can be handled by existing methods. 

We first introduce some notations.
Let $B$ be the batch size, $L \in \mathbb{N}$ be the length of each sequence, $\mathcal{S} \in \mathbb{N}^{B\times L}$ be a minibatch of training samples,
$s_i\in \mathbb{N}^{L}$ be $i$-th training sample in a minibatch for $i\in [B]$. 
Let $M \in \mathbb{N}$ be the vocabulary size, $E\in \mathbb{R}^{M\times d}$ be the (shared) embedding layer where $d$ is the dimension of each embedding. 
We group the parameter of whole network into two parts, $\theta = (E, -E)$.
Let $(a_{\mathcal{S}})_i \in \{0, 1\}^{L\times M}$ be the one-hot encodings of the input sequence $s_i$ for $i\in [B]$. When it is fed into the input embedding layer, the output $(e_{\mathcal{S}})_i\in \R^{L\times d}$ is,
\begin{equation}
    (e_{\mathcal{S}})_i = \text{InputEmbedding}_E((a_{\mathcal{S}})_i).
\end{equation}
Let $(\tilde{e}_{\mathcal{S}})_i \in \mathbb{R}^{L\times d}$ be the output of the Transformer encoder when  it is fed into $(e_{\mathcal{S}})_i$, i.e.,
\begin{equation}
    (\tilde{e}_{\mathcal{S}})_i = \text{TransformerEnc}_{-E}((e_{\mathcal{S}})_i)
\label{eq:transformerenc}
\end{equation}
Let $\mathcal{C} \in \mathbb{N}^{M'}$ be the candidates for the next prediction where $M'$ is the number of candidates and $M' = M$.
Let $(a_{\mathcal{C}})_j \in \{0, 1\}^{M}$ be the one-hot encoding of $j$-th candidate in $\mathcal{C}$ for $j\in [M']$. When it is fed into the output embedding layer, the output $e_{\mathcal{C}}\in \R^{M'\times d}$ is,
\begin{equation}
    (e_{\mathcal{C}})_j = \text{OutputEmbedding}_E((a_{\mathcal{C}})_j).
\end{equation}
For the training sample $i\in [B]$, the candidate $j \in [M']$, the prediction score $(r_{\mathcal{S}})_{i, j} \in \mathbb{R}$ for the next token of the sentence $({\mathcal{S}})_{i}$ is obtained by
\begin{equation}
    (r_{\mathcal{S}})_{i, j} = \langle \left(e_{\mathcal{C}}\right)_j, (\tilde{e}_{\mathcal{S}})_i\rangle,
\end{equation}
where $(\cdot)_{i,j}$ denotes the $(i,j)$ entry of a matrix.

Let $\mathcal{L} = \frac{1}{B}\sum_{i=1}^B\mathcal{L}_i$ be the average loss in the minibatch where $\mathcal{L}_i$ is the per-sample loss with respect to the $i$-th sample, which is computed by the cross-entropy loss,
\begin{equation}
    \mathcal{L}_i = \text{CrossEntropy}((r_{\mathcal{S}})_{i}, (y_{\mathcal{S}})_i),
\end{equation}
where $(y_{\mathcal{S}})_i \in \{0, 1\}$ is the ground truth, i.e., the true next token of the sequence $s_i$. A standard back-propagation gives us the following intermediates for $i\in [B]$,
\begin{equation}
    \nabla (e_{\mathcal{S}})_i:= \partial \mathcal{L}_i / \partial (e_{\mathcal{S}})_i \in \mathbb{R}^{L\times d},
\end{equation}
\begin{equation}
     (\nabla e_{\mathcal{C}})_i := \partial \mathcal{L}_i / \partial e_{\mathcal{C}} \in \mathbb{R}^{M'\times d}.
\end{equation}
Given above entities, we show how to obtain the per-sample gradient norm for the shared embedding $ \|g_{E}\|$ in \Cref{alg:pc}.
The derivation is deferred to \Cref{apdx:pfphan}. 

\begin{algorithm}[!htb]
    \caption{Phantom Clipping}
      \textbf{Parameter}: Bach size $B$, sentence length $L$, vocabulary size $M$, candidate size $M'=M$, embedding dimension $d$.\\
     \textbf{Input}: $a_{\mathcal{S}} \in \mathbb{R}^{B\times L\times M}$, $a_{\mathcal{C}} \in \mathbb{R}^{M'\times M}$, $\nabla e_{\mathcal{S}} \in \mathbb{R}^{B\times L\times d}$, $\nabla e_{\mathcal{C}} \in \mathbb{R}^{B\times M'\times d}$.

     \textbf{Notation}: For $X\in \mathbb{R}^{a\times b\times c}, Y\in \mathbb{R}^{a\times c\times d}$, define $X\cdot Y \in \mathbb{R}^{a\times b\times d}$ as $(X\cdot Y)_i = X_i \cdot Y_i \in \mathbb{R}^{b\times d}$ for $i\in [a]$.
    
    \begin{algorithmic}[1] 


    
	\STATE Calculate $A \leftarrow a_{\mathcal{S}}\cdot a_{\mathcal{S}}^T \in \mathbb{R}^{B\times L\times L}$;
	\STATE Calculate $B \leftarrow \nabla e_{\mathcal{S}}\cdot \nabla e_{\mathcal{S}}^T \in \mathbb{R}^{B\times L\times L}$;
	\STATE Calculate $C \leftarrow a_{\mathcal{S}}\cdot \nabla e_{\mathcal{C}} \in \mathbb{R}^{B\times L\times d}$;
    \FOR {$i= 1, ..., B$ (in parallel)} 
    \STATE Calculate $\|g_E\|_i (\in \mathbb{R}) \leftarrow$ \\$ \left(\langle A_i, B_i\rangle^2 + \|(\nabla e_{\mathcal{C}})_i\|^2 + 2\cdot \langle \nabla (e_{\mathcal{S}})_i, C_i \rangle\right)^{1/2} $;
     \ENDFOR
     \STATE Output per-sample gradient norm $\|g_{E}\| \in \mathbb{R}^{B}$.
    \end{algorithmic}
\label{alg:pc}
\end{algorithm}

\noindent \textbf{Privacy guarantee}. Observe that what we propose is an efficient way to compute a function (i.e., per-sample gradient norm) required by DP-SGD. The input-output behavior of our method is \emph{identical} to DP-SGD. Therefore, our method inherits its privacy guarantee.

\noindent \textbf{Discussion on memory overhead for the embedding layer.} We study the additional memory footprint required by computing the gradient norm for embedding layer as above. 
Step 1 has memory complexity of $O(BL^2)$, since the space complexity for multiplying two matrices $X \in \mathbb{R}^{m\times n}$ and $Y \in \mathbb{R}^{n\times p}$ is roughly $O(mp)$ if implemented appropriately, which is the case for frameworks like PyTorch.
The memory complexity of step 2 and 3 follows for the same reason. Step 4 involves only in-place operations.
Therefore, the total memory complexity is $O(BL^2 + BLd)$.

As a comparison, the existing gradient clipping method for Transformer~\cite{li2022large}, i.e., Ghost Clipping, has a memory complexity of $O(BT^2)$ when the input to the layer $a_i$ has the shape of $\mathbb{R}^{T\times \cdot}$. Hence, its memory complexity for the two embedding layers is $O(BM^2 + BL^2)$ where $M$ is the vocabulary size and $L$ is the sentence length. 
For concreteness, the reader can think of $L > d$ and $L = o(M)$ since the sentence length is typically significantly less than the vocabulary size.
Since Ghost Clipping is not specifically optimized for the embedding layer, by a refinement of their method, it is possible to optimize it to $O(BL^2)$\footnote{Our contribution is that we first refine the technique of Ghost Clipping to improve its training speed for the embedding layer. Starting from that, we further make it to support the embedding sharing, resulting in our Phantom Clipping.}. Therefore, our Phantom Clipping matches this memory complexity while additionally supporting embedding sharing for the standard Transformer models. Recall that embedding sharing has benefits beyond efficiency, such as better generalization.

\noindent \textbf{Discussion on overall speedup.} 
By our Phantom Clipping, we are able to compute the gradient norm for the shared embedding layer without instantiating the per-sample gradient. 
Note that the total memory overhead comprises multiple additive components, each corresponding to a specific layer, i.e., $O($cost of the embedding layer + cost of other layers$)$. 
We have discussed the cost of the embedding layer. The cost of other layers remains the same as existing methods such as Ghost Clipping. Hence, the advantage discussed above might diminish when the costs associated with other layers dominate the overall term. However, when the model is relatively small, the cost for the embedding layer will be the dominant term, making Phantom Clipping significantly more advantageous.
We note that such small model is indeed preferred in reality in the sense that it is suitable for local inference on computing-restrained end devices, eliminating the need for uploading sensitive data to the cloud service for inference~\cite{ramaswamy2020training}. Empirical evaluation of Phantom Clipping is reported in \Cref{apdx:phanempirical}.

\section{Re-Attention Mechanism}
\label{sec:reattn}

We present the Re-Attention Mechanism in Section 4.3. Before that, to provide a glimpse into the attention distraction phenomenon, we summarize the high-level intuition in Section 4.1 and explain in more detail in Section 4.2. If the reader prefers to read the actual method directly, please start from Section 4.3.

\subsection{Overview and Intuition}
\label{sec:intuition}
We will first provide an overview and some intuition behind the attention distraction phenomenon.

\textbf{The anisometry of the embedding parameter.}
Recall that DP-SGD (\Cref{eq:DPSGD}) adds isometric Gaussian noise to the model parameters. We might be tempted to think that each parameter is treated equally. However, the situation is a bit more nuanced, in particular, when the input space is discrete.
A key observation is that, when training Transformer models with differential privacy, the parameter of the embedding layer is anisometric in the sense that some parameters are relatively well trained, corresponding to small variance, while some are not, corresponding to large variance.
To see this, let us consider a thought experiment as follows. Let the embedding layer be parameterized by $E\in \R^{M\times d}$ where $M$ is the vocabulary size and $d$ is the dimension of the embedding. 
Suppose we unintentionally add a dummy embedding $e_{M+1}\in \R^d$ into $E$ to form $E'\in \R^{(M+1)\times d}$, and start training the model with differential privacy as usual. 

During each iteration $t$, isometric Gaussian noise is added to the whole embedding layer $E'$ to privatize the gradient. After $T$ iterations, the value of that dummy embedding is
\begin{equation}
\begin{split}
    e_{M+1} = e_0 + \epsilon_1 + \epsilon_2 + \dots + \epsilon_T
\end{split}
\end{equation}
where $e_0$ is the initial value of $e_{M+1}$, $\epsilon_i \sim \mathcal{N}(0, \sigma_{\dip} \mathbf{I})$ for $1\leq i \leq T$. $\sigma_{\dip}$ is the noise multiplier computed from the privacy accountant.
Therefore, $e_{M+1}$ is distributed as $\mathcal{N}(e_0, T\sigma \cdot \mathbf{I})$.
In other words, after each iteration the variance gets larger and hence the embedding gets noisier and less informative (if the initial value $e_0$  contains some information). 

With that in mind, let us then consider a token that infrequently 
occurs (i.e., close to dummy in that sense). If it does not occur in the current training batch, its embedding will not gain information from the gradient, but the Gaussian noise is always injected\footnote{Note that this is necessary to ensure differential privacy, which hides the membership of this token.}. The noise will be accumulated after several iterations until this token occurs in the training batch.
Different tokens have different frequency, which leads to anisometric variance in their corresponding embedding.

\textbf{Nonlinearity of attention computation.} 
So, what will happen if tokens have anisometric variance? Suppose we apply some linear transformation $T: x \mapsto ax+b$ to the random variable $x$, this is acceptable in the sense that $\E[T] = T(\E[x])$. In other words, the expected value after the transformation is \emph{independent of its variance}.

However, the attention computation is nonlinear, which involves exponentiation operation. Think of the attention computation as a function $\text{Attn}(\cdot)$ which takes as input $x$, outputs its attention score with respect to some query. If the input $x$ follows the distribution $\mathcal{N}(\mu, \sigma)$, then the expected output value can be represented as $\E[\text{Attn}(x)] = f(\sigma, \text{Attn}(\mu))$, where $f(\cdot)$ is some function that monotonically increases with $\sigma$. To put it in context, suppose we are computing the attention scores of the tokens in a sentence. Then tokens with higher variance will attract more attention than it should. In other words, the attention is distracted maliciously, which turns out to be in favor of the tokens with higher variance, regardless of its actual relevance. Details follow.

\subsection{Attention Distraction}
\label{sec:attentiondistraction}
For ease of exposition, we split the randomness required by training Transformer model with differential privacy into two parts. The coin flipping $r_1\in \{0, 1\}^{*}$ is inherited from non-private machine learning, which involves model initialization and minibatch sampling. 
The coin flipping $r_2\in \{0, 1\}^{*}$ is additionally required for injecting DP noise. We fix $r_1$ and let $\bm{r_2}$ be random variable uniformly distributed over $\{0, 1\}^{*}$. 
Our subsequent analysis holds for arbitrary $r_1$, and therefore, it is also valid for uniformly random values of $r_i$ by an average argument.
For clear exposition, we will use bold font to represent a random variable and normal font to represent the value taken by the corresponding random variable.

For $2\leq t \leq T$, at the end of the iteration $t-1$ , conditioned on the event that the model parameter before adding DP noise is $\theta_{\text{non\_priv}}^{(t-1)}$, DP noise is injected to privatize the model as follows, which is a re-formulation of DP-SGD,
\begin{equation}
    \bm{\theta^{(t-1)}} = \text{Privatize}\left(\theta_{\text{non\_priv}}^{(t-1)}, \bm{r_2}\right),
\label{eq:privatize}
\end{equation}
where $\text{Privatize}(\cdot)$ refers to the process of adding Gaussian noise to the model parameter for differential privacy according to \Cref{eq:DPSGD}. More concretely, it takes as input the non-private version of the updated model parameter $\theta_{\text{non\_priv}}^{(t-1)}$ and the randomness $\bm{r_2}$ required for sampling Gaussian noise, then injects the noise into the model parameter\footnote{Note that this is equivalent to \Cref{eq:DPSGD}, which first adds noise to the gradient and updates the model parameter.}. The distribution of random variable $\bm{\theta_{(t-1)}}$ is induced by the uniform random variable $\bm{r_2}$. Namely, $\bm{\theta_{(t-1)}}$ follows a Gaussian distribution with mean $\theta_{\text{non\_priv}}^{(t-1)}$ and variance $\sigma_{\dip}$. It is important to stress that we are not allowed to access $\theta_{\text{non\_priv}}^{(t-1)}$, otherwise this could lead to the violation of differential privacy. Instead, what we can access is the noisy parameter $\theta_{\text{private}}^{(t-1)}$ sampled from $ \bm{\theta^{(t-1)}}$ (but we can only sample once).

For simplicity and without loss of generality, let the batch size be 1. At the beginning of the iteration $t$, the training process will first sample a training sequence using $r_1$ as the randomness, denoted by $s \in \mathbb{N}^L$ where $L \in \mathbb{N}$ is the length of the sequence. It then feeds $s$ into the model for forward propagation. Before performing the attention score calculation, it will compute the key for token $i$. The whole process can be represented as

\begin{equation}
\begin{split}
    \bm{K}_i = \text{PreAttention}\left(\bm{e_i}, \bm{\theta^{(t-1)}}_{-E}, \mathcal{D}, r_1\right), \\
\end{split}
\end{equation}
where $\bm{e_i}\in \R^{d}$ is the embedding of $i$-th token, $\bm{\theta^{(t-1)}}_{-E}$ is the parameter of the Transformer encoder (i.e., excluding the parameter of the embedding layer), $\mathcal{D}$ is the training dataset, $\bm{K}_i \in \mathbb{R}^{d\times L}$ ($d \in \mathbb{N}$ is the model dimension) is the random variable representing the keys waiting for attention score calculation. Its distribution is induced by $\bm{e_i}, \bm{\theta^{(t-1)}}_{-E}$. Qualitatively, observe that the higher the input variance $\Var[\bm{e_i}]$, the higher 
the output variance  $\Var[\bm{K}_i]$ will be. In other words,  the heterogeneity of the embedding parameter translates to the heterogeneity of its key.

For a query $q\in \R^{d}$, let $\bm{S}_i$, $1\leq i\leq L$, be random variable taking values in $\mathbb{R}$, which represents the attention score of $i$-th token in the training sequence $s$, computed from $q$ and $\bm{K}_i$. With some basic algebraic manipulation and applying the theory of extreme value~\citep{coles2001introduction}, we can recast the formula for attention scores as follows\footnote{For ease of notation, we omit the constant factor (i.e., $1/\sqrt{d}$) in attention computation.},
\begin{equation}
\begin{split}
    \bm{S}_i &= \text{Attention}(q, \bm{K}_i) \\
     & = \frac{\exp{\langle q, \bm{K}_i\rangle}}{\sum_{j=1}^L \exp{\langle q, \bm{K}_j\rangle}} \\
    &= \exp{\left(\langle q, \bm{K}_i\rangle - \log \sum_{j=1}^{L} \exp{\langle q, \bm{K}_j\rangle}\right)} \\
    &= \exp{\left(\langle q, \bm{K}_i\rangle - \E_{\bm{\gamma}} [\operatorname{max}_j\{\langle q, \bm{K}_j\rangle + \bm{\gamma} \}]\right)}, \\
\end{split}
\label{eq:logsumexp}
\end{equation}
where $\bm{\gamma}$ is distributed as a standard Gumbel. 
Let us consider some token $i' \in [L]$ that should have attracted little attention given the query $q$,
then the expectation of the noisy maximum $\E_{\bm{\gamma}} [\operatorname{max}_j\{\langle q, \bm{K}_j\rangle + \bm{\gamma} \}]$ can be approximated by $\widetilde{\bm{M}} = \operatorname{max}_{j\neq i'}{\langle q, \bm{K}_j\rangle} + \zeta$, where $\zeta = \E [\gamma] = \pi^2/6$.
Taking the expectation of \Cref{eq:logsumexp} over $\bm{K}_{i'}$, by Jensen's Inequality, it holds that 
\begin{equation}
\E_{\bm{K}_{i'}} [\bm{S_{i'}}] \geq \exp\left({\E_{\bm{K}_{i'}} [\langle q, \bm{K}_{i'}\rangle - \widetilde{\bm{M}}]}\right)
\end{equation}
That is, the expected attention score for token $i'$ will be larger than it should be. To be more quantitative, we assume that most of the mass of $\bm{K}_{i'}$ is symmetric around its mean. Jumping ahead, we can conclude
\begin{equation}
\E_{\bm{K}_{i'}} [\bm{S_{i'}}] \approx f(\Var[\bm{K}_{i'}])\cdot \exp\left({\E_{\bm{K}_{i'}} [\langle q, \bm{K}_{i'}\rangle - \widetilde{\bm{M}}]}\right),
\label{eq:attndistraction}
\end{equation}
where $f(\cdot)$ is some function whose output is greater than 1 and  monotonically increases with its input.

As a result, tokens with higher variance result in inflated attention scores due to the multiplicative bias $f(\Var[\bm{K}_{i'}])$, distracting attention from more deserving tokens, given that token $i'$ is presupposed to garner little attention under query $q$. 
The greater the imbalance in the training data and the higher the level of privacy, the higher variance such distraction will be. 
If all tokens have similar variance or variance terms are negligible, the negative effects of this attention distraction are reduced. However, in less idealized scenarios, especially with real-world data, the effect of attention distraction could hinder the training of the Transformer, thereby degrading model utility.

\begin{algorithm}[!htb]
    \caption{Re-Attention Mechanism}
    \label{alg:re-attn}
    \textbf{Input}: $e_{\mathcal{S}}\in \mathbb{R}^{B\times L\times d}$: A minibatch of training sentences (in the form of embedding); $B$: batch size;  $L$: sentence length; $N$: number of Transformer layers\\
    \textbf{Parameter}: Privacy parameters $\varepsilon, \epsilon>0, \delta \in(0, 1)$;
    \begin{algorithmic}[1] 
	\STATE Calculate $\sigma_{\text{DP}} \leftarrow \text{PrivacyAccountant}(\delta, \varepsilon, ...)$;
	\STATE $c^{(0)}, d^{(0)} \leftarrow \text{Setup}(\sigma_{\text{DP}}, B, p)$; \Comment{\Cref{subsubsec:ErrorInstantiation}}
        \STATE Initialize $e_{\mathcal{S}}^{(0)} \leftarrow e_{\mathcal{S}}$;
        \FOR{$\ell = 1, ..., N$}
          \STATE $X^{(\ell+1)} \leftarrow \text{PreAttenionFeedForward}\left(e_{\mathcal{S}}^{(\ell-1)}\right)$;
                \STATE $S \leftarrow \text{Self-Attention}(X^{(\ell+1)})$;
                \STATE $\sigma \leftarrow \mathcal{T}\left((c, d)^{(\ell)})\right)$;
		    \STATE $S \leftarrow S / \exp \left[C\sigma^2/2\right]$;\Comment{\Cref{sec:re-attn}}
          \STATE $e_{\mathcal{S}}^{(\ell)} \leftarrow \text{PostAttenionFeedForward}(S_i)$;
          \STATE $(c, d)^{(\ell+1)} \leftarrow \mathcal{F}\left((c, d)^{(\ell)}\right)$; \Comment{\Cref{subsubsec:ErrorPropagation}}
        \ENDFOR
        \STATE Set $\tilde{e}_{\mathcal{S}} \leftarrow e_{\mathcal{S}}^{(N)}$, output $\tilde{e}_{\mathcal{S}}$.
    \end{algorithmic}
\end{algorithm}
\subsection{Re-Attention Mechanism}
At its core, the Re-Attention Mechanism is designed to mitigate the above discussed attention distraction by quantitatively keeping track of the variance. 
This logic coincides with the rationale behind Bayesian deep learning~\citep{wang2020survey}, a field primarily focused on quantifying prediction uncertainty to enhance the robustness and reliability of machine learning systems. 
While our primary interest lies in unbiased attention score computation during private training, we can leverage and adapt existing methodologies in Bayesian deep learning to achieve this distinct goal. 

The overall method is presented in \Cref{alg:re-attn} at high level, which realizes the functionality of $\operatorname{TransformerEnc}$ in~\Cref{eq:transformerenc}.
Below, in a step by step fashion, we explain each newly added procedure in detail.

\subsubsection{Setup}
\label{subsubsec:ErrorInstantiation}

Motivated by the intuition in \Cref{sec:intuition}, we naturally introduce the notion of \emph{Effective Error} to capture the heterogeneity of the variance in the embedding layer, though our actual definition encompasses all layers more broadly.
\begin{definition}
\textbf{(Effective Error)} Let $\theta \in \R^*$ be some parameter of the model, the effective error $\sigma_{\eff}^{\theta}$ associated with the model parameter $\theta$ is defined as
\begin{equation}
\begin{split}
    \sigma_{\eff}^{\theta} = \frac{\sigma_{\dip}}{B_{\eff}^{\theta}}, ~~B_{\eff}^{\theta} = \E_{\mathcal{B}\stackrel{\text{\tiny{i.i.d.}}}{~\sim~}\mathcal{D}^B} \left[\sum_{i=1}^B \mathbb{I}\left[ R_{\theta}(\mathcal{B}_i)\right]\right],
\end{split}
\label{eq:effectiveerror}
\end{equation}
where $B\in \mathbb{N}$ is the batch size, $\mathcal{B}\in \mathbb{N}^{B\times L}$ is the minibatch, i.i.d. sampled from training data distribution $\mathcal{D}$ (note that $\mathcal{B}_i \in \mathbb{N}^L$ is a sequence of tokens), $\sigma_{\dip}$ is the DP noise multiplier in \Cref{eq:DPSGD}, and $\mathbb{I\left(\cdot\right)}$ is the indicator function. $R_\theta(\cdot) = 1$ if and only if the training sentence $\mathcal{B}_i \in \mathbb{N}^L$ has relevance with $\theta$. More specifically,
we split the model parameter into $[E \in \R^{M\times d}, W = -E]$ where $E$ is the embedding layer, $W$ is the remaining. On input $s \in \mathbb{N}^L$, $R_\theta(s)$ is defined as follows,
\begin{equation}
    R_\theta(s) = \begin{cases} \mathbb{I}[{\rm token}~i\in s]&\text{if}~\theta = E_i\\ 1&\text{if}~\theta \in W,\end{cases}
\end{equation}
where $E_i$ is the $i$-entry of the matrix $E$, corresponding to $e_i$, i.e., the embedding of the token $i$.
\end{definition}

Let us parse these equations. When $\theta = W$, it holds that $B_{\eff}^{\theta} \equiv B$ and hence $\sigma_{\eff}^{\theta}=\sigma_{\dip}/B$, recovering the effective noise introduced in~\cite{li2022large}. For the embedding layer $E$, the intuition is that the more frequently a token $i$ occurs, the larger $B_{\eff}^{E_i}$ will be, leading to smaller $\sigma_{\eff}^{E_i}$. Indeed, the Effective Error for the embedding of token $i$ $\sigma_{\eff}^{E_i}$ is inversely proportional to $p_i$, which is the frequency of token $i$ (i.e., the probability of token $i$'s occurrence in data). Derivation is in \Cref{apdx:errorinst}.

\begin{claim}
    For the embedding layer $E$, the  Effective Error of token $i$ is $\sigma_{\eff}^{E_i} = \sigma_{\dip} / (B\cdot p_i)$, where $p_i$ is the frequency of token $i$ (i.e., the probability of token $i$'s occurrence in data).
\label{cl:errorinst2}
\end{claim}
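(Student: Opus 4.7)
The plan is to unfold the definition of Effective Error in the case $\theta = E_i$ and collapse the expectation using linearity and the i.i.d.\ structure of the minibatch. The computation is essentially a one-line application of linearity of expectation, so the ``proof'' is really a careful bookkeeping exercise; the only genuine content is correctly interpreting the quantity $p_i$ that appears on the right-hand side.

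First I would specialize $R_\theta(\cdot)$ in the piecewise definition to the case $\theta = E_i$, which gives $R_{E_i}(s) = \mathbb{I}[\text{token } i \in s]$. Substituting into the formula for $B_{\eff}^{E_i}$ yields
\begin{equation*}
B_{\eff}^{E_i} \;=\; \E_{\mathcal{B} \sim \mathcal{D}^B}\!\left[\,\sum_{j=1}^{B} \mathbb{I}\!\left[\text{token } i \in \mathcal{B}_j\right]\right].
\end{equation*}
By linearity of expectation and the fact that the $\mathcal{B}_j$ are i.i.d.\ from $\mathcal{D}$, each summand contributes the same value $\Pr_{s \sim \mathcal{D}}[\text{token } i \in s]$, so that $B_{\eff}^{E_i} = B \cdot \Pr_{s \sim \mathcal{D}}[\text{token } i \in s]$.

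The next step is to identify this per-sequence occurrence probability with the stated frequency $p_i$, matching the parenthetical clarification ``the probability of token $i$'s occurrence in data'' in the claim. Under this reading, $\Pr_{s \sim \mathcal{D}}[\text{token } i \in s] = p_i$, so $B_{\eff}^{E_i} = B \cdot p_i$. Plugging into the definition $\sigma_{\eff}^{\theta} = \sigma_{\dip}/B_{\eff}^{\theta}$ from Equation~(12) immediately gives $\sigma_{\eff}^{E_i} = \sigma_{\dip}/(B \cdot p_i)$, as claimed.

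The main (and really only) obstacle is the semantic one of pinning down $p_i$ precisely: if one instead interpreted $p_i$ as the per-token marginal (i.e., the probability that a uniformly drawn position in a random sentence holds token $i$), then one would need a union-bound or inclusion–exclusion argument and the identification would hold only approximately, under the assumption that tokens rarely repeat within a sentence. I would therefore flag the per-sequence interpretation explicitly at the start of the proof, note that it matches the parenthetical wording in the claim, and proceed with the one-line linearity argument above.
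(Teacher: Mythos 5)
Your proposal is correct and follows essentially the same route as the paper's own derivation: specialize $R_{E_i}$, apply linearity of expectation over the i.i.d.\ minibatch to get $B_{\eff}^{E_i} = B\cdot p_i$, and divide into $\sigma_{\dip}$. Your explicit flagging of the per-sequence interpretation of $p_i$ (i.e., $p_i = \Pr_{s\sim\mathcal{D}}[\text{token } i \in s]$) matches the reading the paper uses implicitly, so there is no gap.
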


To summarize, the $\sigma_{\eff}^{W}$ is directly computed from the training hyperparameters (i.e., privacy budget, batch size), and is identical for each dimension of $W$. In contrast, $\sigma_{\eff}^{E_i}$ is inversely proportional to its frequency $p_i$ and hence heterogeneous across different $i$s. 
Since obtaining the frequency $p_i$ can be easily done with differential private and is mostly orthogonal to this work. For simplicity we assume that $p_i$ is already given under DP guarantees. Alternatively, one can think of $p_i$ as some data-independent prior information.

We then turn to set up the mean for each parameter of the neural network, viewed as the random variable due to randomness $\bm{r_2}$ of the DP training.
Recall  that in the Bayesian deep learning, the model parameter is directly associated with its mean, which corresponds to  $\theta_{\text{non\_priv}}^{(t-1)}$ in \Cref{eq:privatize}. However, we can only access the noisy parameter $\theta_{\text{private}}^{(t-1)}$ after the injection of DP noise. We will use that as the approximation for its mean. Justification follows.
The process of adding Gaussian noise to ensure differential privacy is equivalent of sampling from $ \bm{\theta^{(t-1)}}\sim \mathcal{N}\left(\theta_{\text{non\_priv}}^{(t-1)}, \sigma_{\dip}\right)$. Access to this noisy parameter can be interpreted as a single sampling opportunity from its underlying Gaussian distribution, which can then be viewed as a one-time Markov chain sampling~\citep{wang2015privacy}.

\subsubsection{Error Propagation}
\label{subsubsec:ErrorPropagation}

\begin{figure}[ht]
\centering
\setlength{\abovecaptionskip}{0.cm}
\includegraphics[width=6cm]{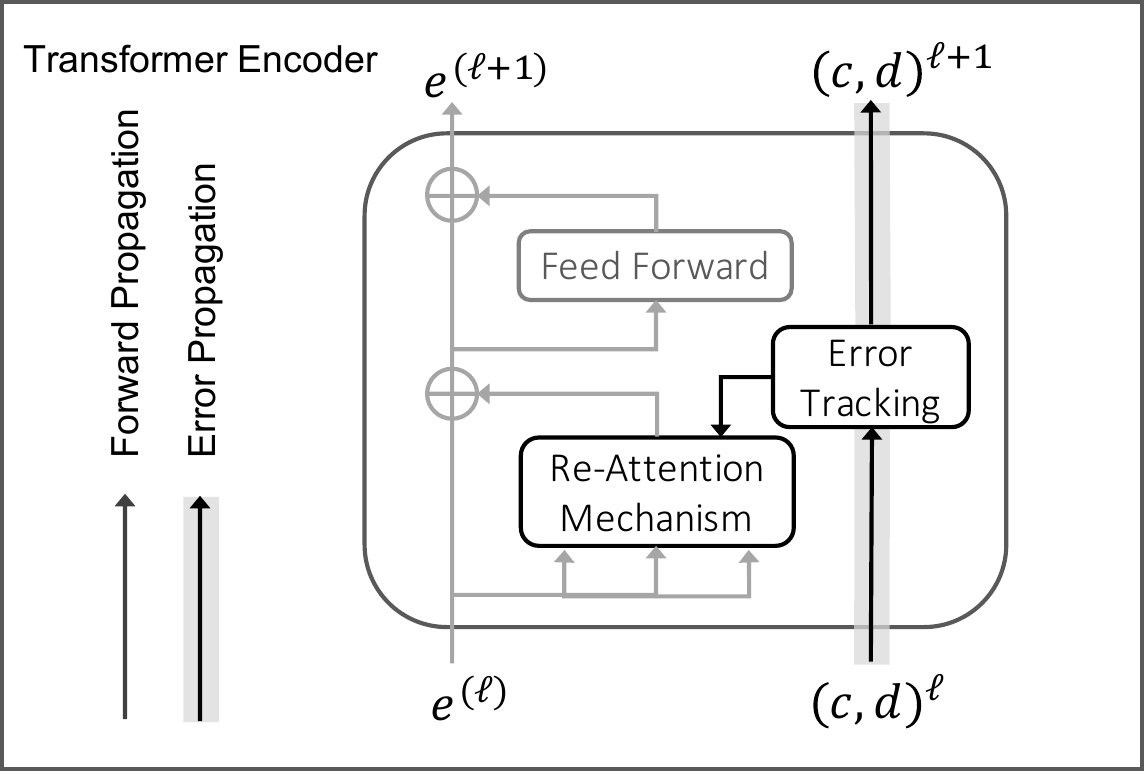}

\caption{Re-Attention Mechanism, illustrated.
}
\label{fig:reattn}
\vspace{-0.2cm}
\end{figure}

Given the effective errors of the embedding layer and of the Transformer encoder, our goal is to obtain the distribution of key $\bm{K}_{i}$ for each token $i$.
Suppose we know its exact distribution, we can then obtain its expected attention score and debias the multiplicative bias term $f(\cdot)$ in \Cref{eq:attndistraction}.
However, the exact distribution is computationally intractable. To tackle this issue, a large body of research in the field of Bayesian deep learning focuses on its efficient approximation. Details follow.
 
We denote the output of the $\ell$-th layer by the random variable $X^{(\ell)}$.
Given the output distribution $X^{(\ell-1)}$ of the preceding layer, the distribution $X^{(l)}$ can be computed layer-by-layer as follows, 
\begin{equation}
\begin{split}
    p\left(X^{(\ell)}|X^{(0)}\right) &= \E_{X^{(\ell-1)}|X^{(0)}} \left[p\left(X^{(\ell)} | X^{(\ell-1)}\right)\right].\\
\end{split}
\end{equation}
Based on Variational Inference~\citep{kingma2013auto}, we can use an approximating distribution $q$ to approximate the computationally intractable distribution $p$, where $q\left(X^{(\ell)}\right)$ follows a Gaussian distribution of mean $\mu$ and variance $\sigma^2$. 
Note that minimizing the KL divergence of $KL\left(p\left(X^{(\ell)}|X^{(0)}\right) || q\left(X^{(\ell)}\right)\right)$ reduces to matching the moments of $q\left(X^{(\ell)}\right)$ to $p\left(X^{(\ell)}|X^{(0)}\right)$. Since the mean and variance\footnote{Note that for a Gaussian distribution, (i) mean and variance, (ii) the first two moments, and (iii) natural parameter, are equivalent in the sense of mutual convertibility. We will use them interchangeably.} are sufficient statistics for Gaussian distribution, propagating the distribution reduces to propagating its natural parameters~\citep{wang2016natural}. For linear layers coupled with a coordinate-wise non-linear activation, the statistics can be computed by analytic expressions using existing techniques from Probabilistic Neural Networks~\citep{wang2016natural,shekhovtsov2019feed,gast2018lightweight,postels2019sampling,morales-alvarez2021activationlevel}. More details are deferred to \Cref{apdx:npnprop}.

All in all, we obtain the output distribution of layer $(\ell)$ via analytic expression  in terms of the natural parameter of the preceding layer's output distribution as 
\begin{equation}
(c, d)^{(\ell)} = \mathcal{F}\left((c, d)^{(\ell-1)}\right),~~~\sigma^2 = \mathcal{T}\left((c, d)^{(\ell)}\right),
\label{eq:nonprop}
\end{equation}
where $\mathcal{F}(\cdot)$ is the error propagation function and $\mathcal{T}(\cdot)$ is the conversion from natural parameter to the variance.

\subsubsection{Re-Attention} With the effective error tracked, we then proceed to mitigate the attention distraction identified in \Cref{eq:attndistraction}.
\label{sec:re-attn}
Recall that to track the intractable distribution $p$, we use an approximating distribution $q$ following some Gaussian distribution, whose parameter is obtained by moment matching.
By leveraging the fact $\E [\exp(X)] = \exp(\E[X])\exp(\operatorname{Var}[X]/2)$ for $X$ following a Gaussian distribution, the \Cref{eq:attndistraction} can be turned into
\begin{equation}
\begin{split}
    \E_{K_{i'}} [\bm{S_{i'}}] 
    &\approx  \E_{K_{i'}} [\exp{(\langle q, \bm{K}_{i'}\rangle - (\operatorname{max}_{j\neq i'}\{\langle q, \bm{K}_j\rangle\}+ \zeta))}]\\
    &= \underbrace{\exp{\left(\langle q, \bm{K}_{i'}\rangle - \widetilde{M}\right)}}_{\rm attentive~relevance} \cdot \underbrace{\exp{\left(C\sigma^2_{i'}/2\right)}}_{\rm multiplicative~error},
\end{split}
\label{eq:reattnbias}
\end{equation}

where $\widetilde{M}=(\operatorname{max}_{j\neq i'}\{\langle q, K_j\rangle\}+ \zeta)$ and the last equality leverages the fact that $\langle q, K_{i'} \rangle \sim \mathcal{N}(\langle q, k_{i'}\rangle, C\sigma^2)$ with $C=\langle q, q\rangle$.
Then we mitigate the attention distraction via $S_i \leftarrow S_i / \exp \left[C\sigma^2_i/2\right]$, obtaining unbiased attention scores.

In summary, we can propagate and track the effective error through the layers: given the natural parameter of $X^{(\ell-1)}$, the variance can be estimated using analytic expressions, which then can be used to correct the attention scores.

\textbf{Privacy guarantee.} Unlike the privacy proof of Phantom Clipping, the input-output behavior here differs from that of DP-SGD. However, note that we only change the forward propagation procedure of the model, and such a modification does not create a dependency between different samples in a minibatch. Therefore, DP-SGD still bounds the sensitivity by clipping the gradient norm.

\textbf{Discussion.} In the above, we show how to efficiently approximate the variance term due to the intractability of obtaining the exact quantity, primarily using techniques from Bayesian deep learning. The key here is that we only need to use these techniques in a black-box manner. They can be replaced by other, more sophisticated techniques to achieve the same goal.

\subsection{Empirical evaluation}
We empirically evaluate our Re-Attention Mechanism on two public recommendation datasets collected from real-world scenarios: MovieLens~\citep{harper2015movielens} and Amazon~\citep{mcauley2015image}. 
The task is to predict the text item given a sequence of items as input, i.e., the most canonical use case of Transformer models.
\Cref{fig:convergence} shows the model accuracy every five epochs during training. Experimental details and more results are in \Cref{apdx:empiricalreattn}. 
Two points are worth mentioning here.
\begin{itemize}
    \item Overall, the Re-Attention Mechanism consistently improves the training stability, which renders the model notably more stable and enables better convergence during differentially private training. In contrast, the vanilla Transformer model suffers from high variance and/or substantial fluctuation, especially on Amazon.
    \item More quantitatively, our theoretical analysis implies that the degree of attention distraction is related to the degree of the imbalance in the data distribution and the privacy level of DP.
    This is exemplified by the Re-Attention Mechanism's enhanced effectiveness on Amazon (a more imbalanced dataset compared to MovieLens) and at a privacy setting of $\varepsilon=5$, (greater privacy than $\varepsilon=10$). Recall that our theoretical analysis 
    in \Cref{sec:attentiondistraction} shows that the more challenging the task, the more severely the model will suffer from the attention distraction phenomenon, and in turn, the greater advantage will be enjoyed by the Re-Attention Mechanism. The empirical results are in alignment with our theory.
\end{itemize}

\begin{figure}[ht!]
\centering

\begin{subfigure}{0.23\textwidth}
\centering
\setlength{\abovecaptionskip}{0.cm}
\includegraphics[width=4cm]{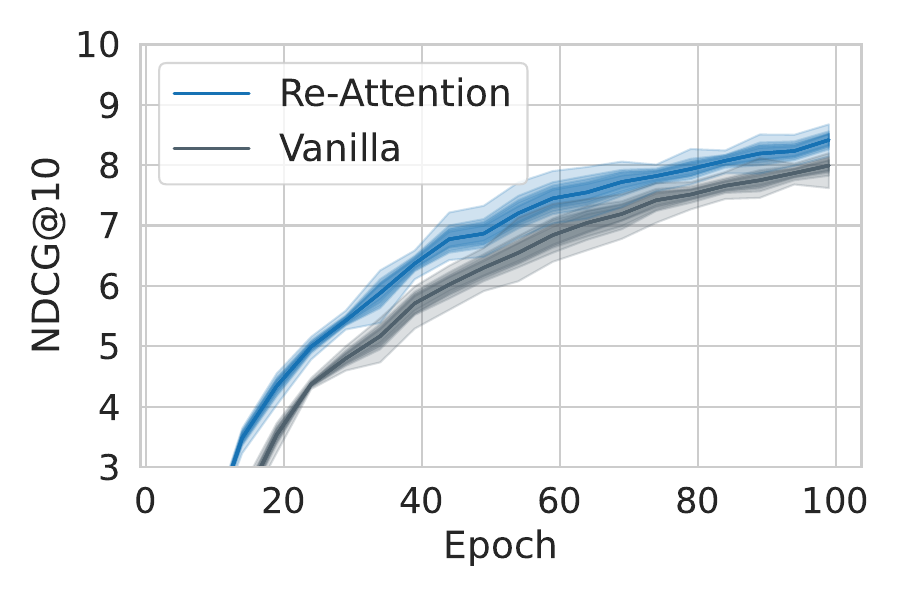}
\caption{MovieLens ($\varepsilon=10$)}
\label{fig:ndcg_ml1m_10}
\end{subfigure}
\begin{subfigure}{0.23\textwidth}
\centering
\setlength{\abovecaptionskip}{0.cm}
\includegraphics[width=4cm]{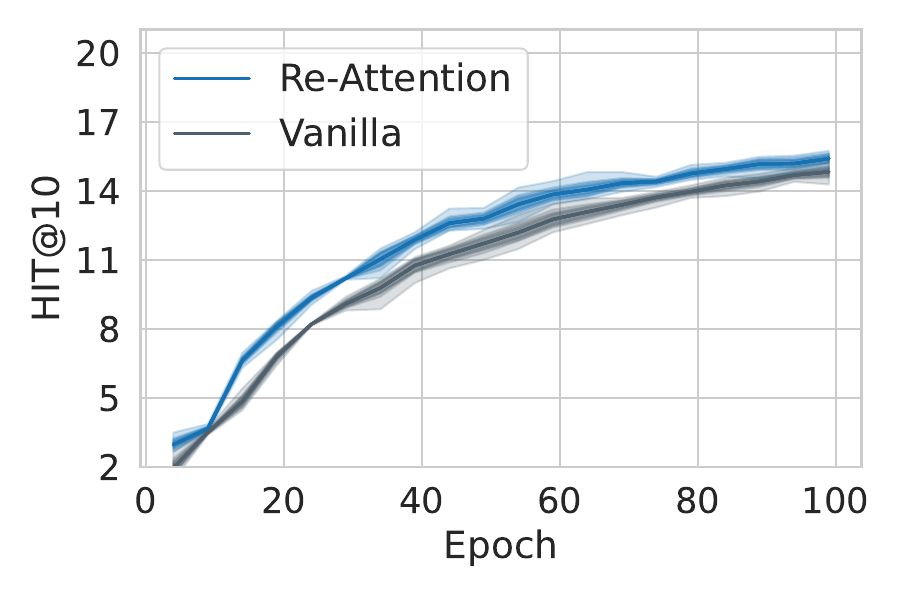}
\caption{MovieLens ($\varepsilon=10$)}
\label{fig:hit_ml1m_10}
\end{subfigure}

\begin{subfigure}{0.23\textwidth}
\centering
\setlength{\abovecaptionskip}{0.cm}
\includegraphics[width=4cm]{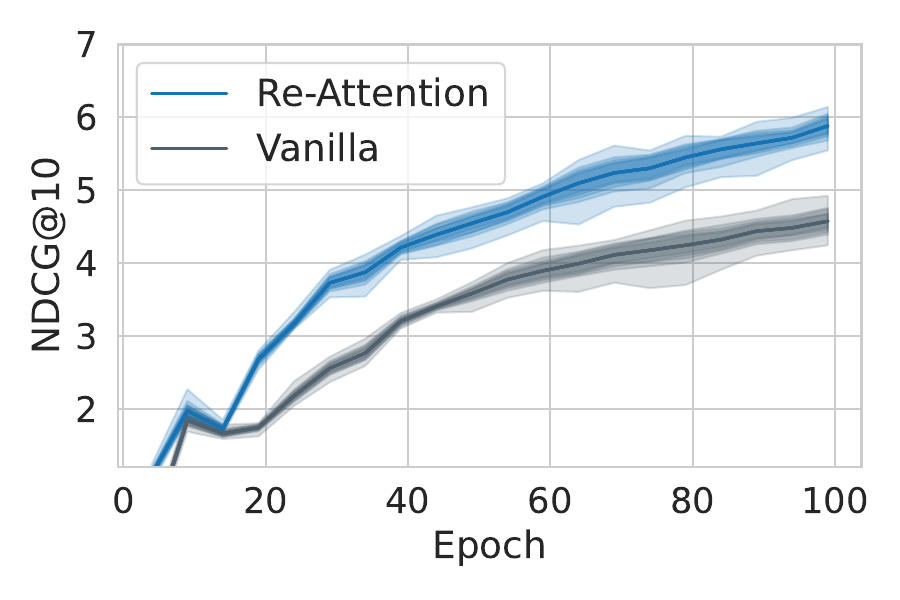}
\caption{MovieLens ($\varepsilon=5$)}
\label{fig:ndcg_amazon}
\end{subfigure}
\begin{subfigure}{0.23\textwidth}
\centering
\setlength{\abovecaptionskip}{0.cm}
\includegraphics[width=4cm]{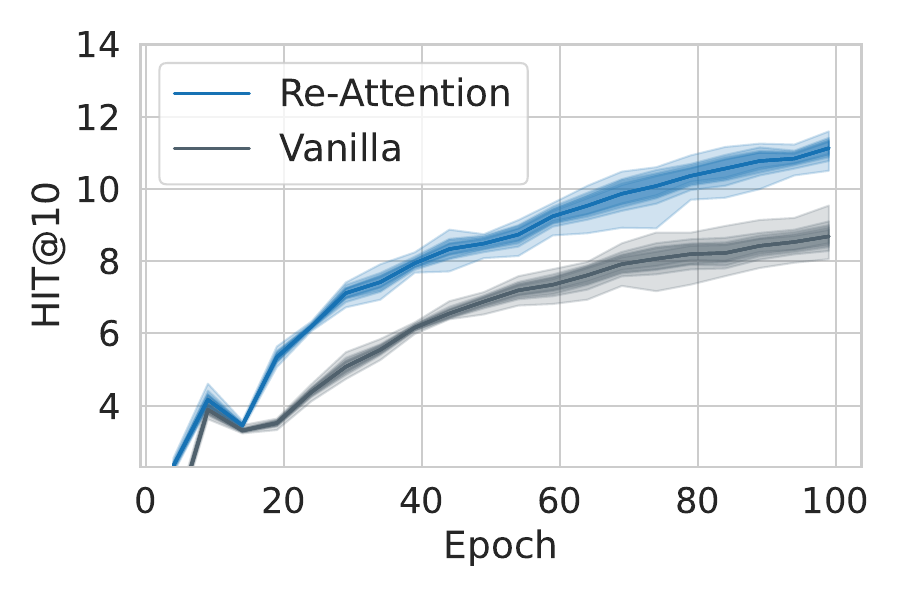}
\caption{MovieLens ($\varepsilon=5$)}
\label{fig:hit_ml1m_5}
\end{subfigure}

\begin{subfigure}{0.23\textwidth}
\centering
\setlength{\abovecaptionskip}{0.cm}
\includegraphics[width=4cm]{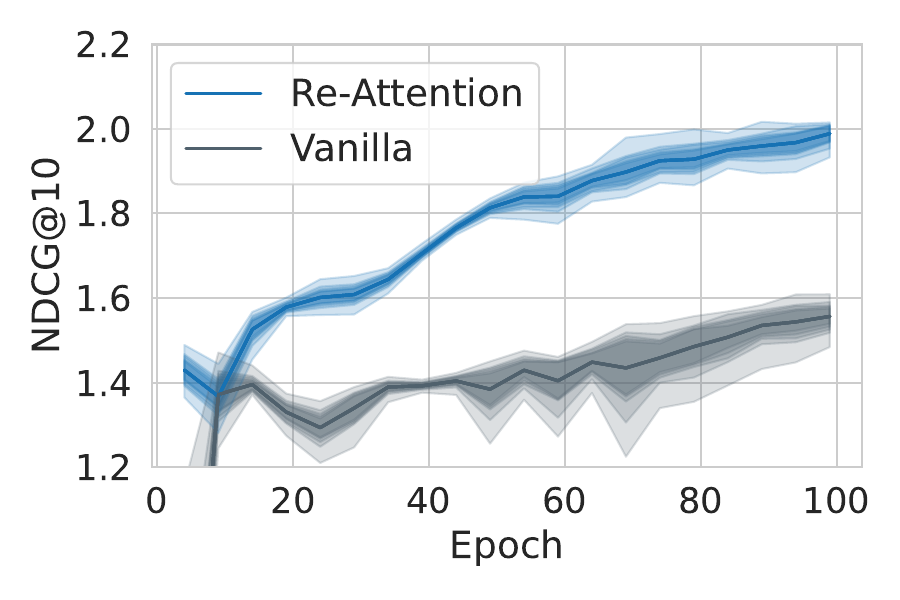}
\caption{Amazon ($\varepsilon=10$)}
\label{fig:ndcg_amazon_10}
\end{subfigure}
\begin{subfigure}{0.23\textwidth}
\centering
\setlength{\abovecaptionskip}{0.cm}
\includegraphics[width=4cm]{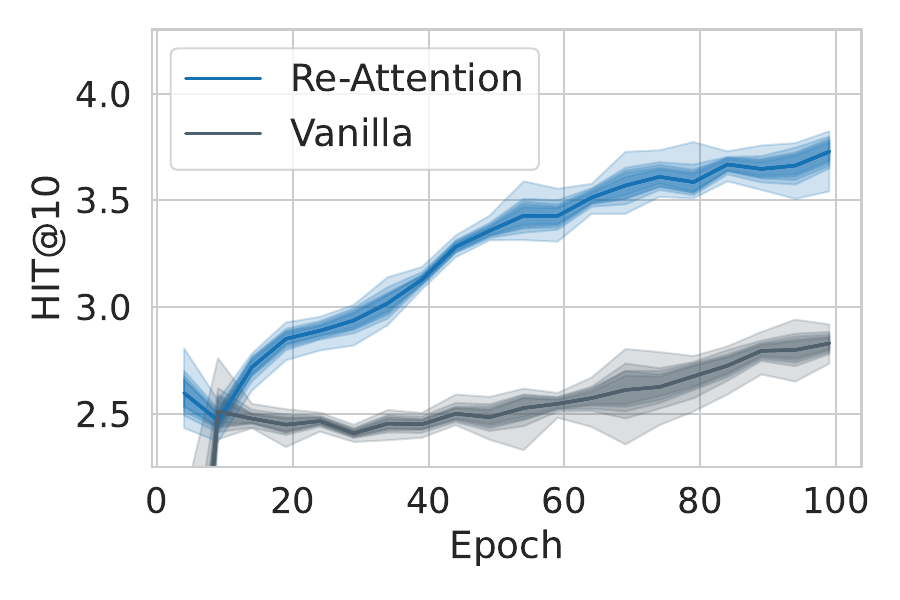}
\caption{Amazon ($\varepsilon=10$)}
\label{fig:hit_amazon_10}
\end{subfigure}

\begin{subfigure}{0.23\textwidth}
\centering
\setlength{\abovecaptionskip}{0.cm}
\includegraphics[width=4cm]{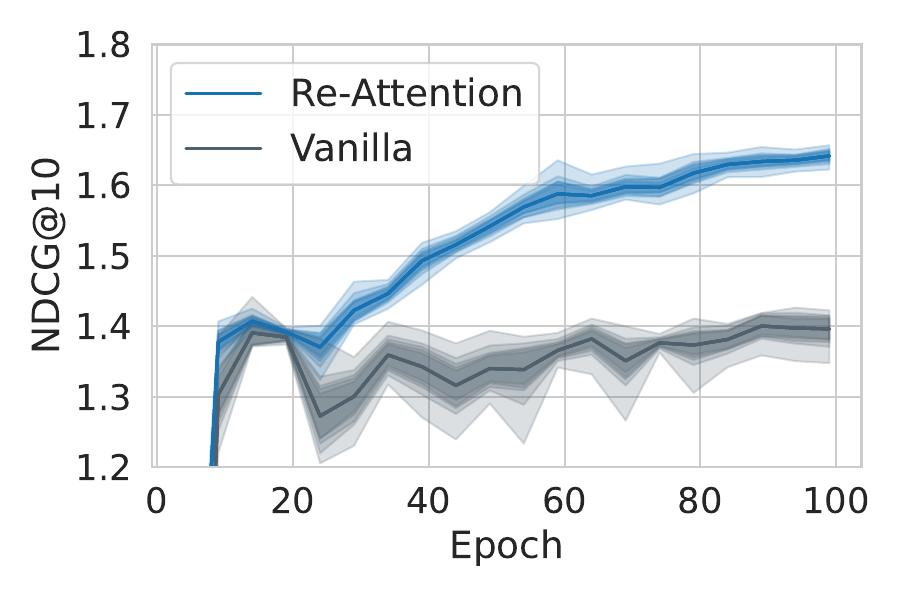}
\caption{Amazon ($\varepsilon=5$)}
\label{fig:ndcg_amazon_5}
\end{subfigure}
\begin{subfigure}{0.23\textwidth}
\centering
\setlength{\abovecaptionskip}{0.cm}
\includegraphics[width=4cm]{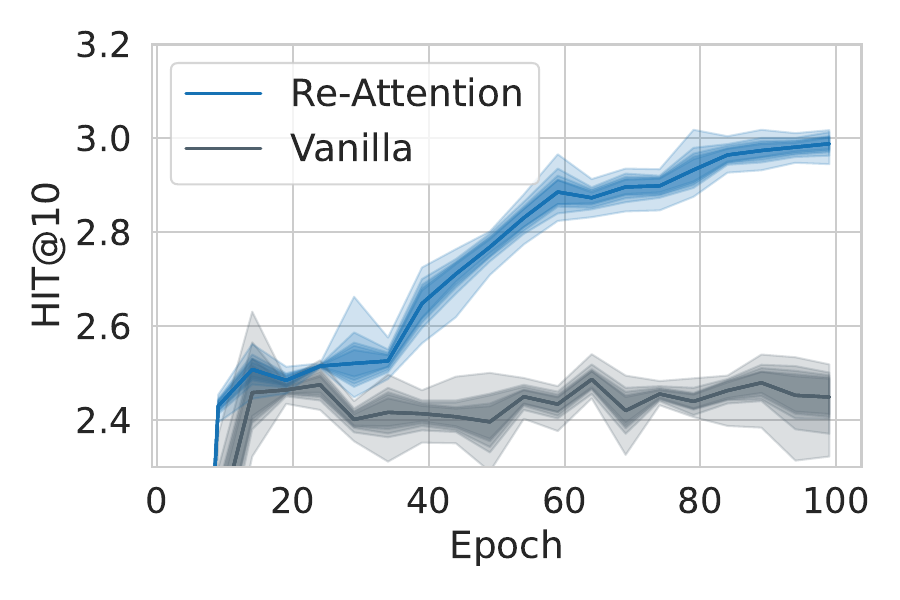}
\caption{Amazon ($\varepsilon=5$)}
\label{fig:hit_amazon_5}
\end{subfigure}

\caption{Each run is repeated five times with independent random seeds, with test accuracy (i.e., NDCG@10(\%) and HIT@10(\%)) reported every five epochs. The graduated shading (best viewed zoomed in) represents confidence intervals from 60\% to 100\%. }
\label{fig:convergence}
\end{figure}
\section{Conclusion}
In this paper, we systematically study the problem of training differentially private Transformer models (Problem A) under a modular treatment. In particular, our main focus is on its reduction to the more basic problem of training vanilla DP Transformers (Problem B).

We first identify the gap between Problem A and Problem B: 1) The attention distraction phenomenon, which negatively affects the intended functionality of that neural net module, and 2) Lack of compatibility with existing techniques for efficient gradient clipping.

To bridge the gap and achieve the reduction, we correspondingly propose 1) Re-Attention Mechanism, which corrects the attention distraction and leads to unbiased attention scores. Thus it restores the intended functionality of self-attention. Now each module of the neural nets is supposed to operate properly, just as DP vanilla neural nets.
2) Phantom Clipping, which provides support for embedding sharing, enabling the Transformer model to benefit from efficient DP-SGD gradient clipping, just as DP vanilla neural nets.



\textbf{Limitation and  open questions.}
The main open question raised by this work is, under our modular treatment, whether our `reduction' is complete or not, in the sense that there might be other subtle yet unrecognized  hardness underlying  training Transformer with differential privacy. We are unable to provide a definitive answer to this question. 


On the empirical side, we only use small Transformer models for empirical evaluation. While the use of small model is justified considering the computing resource of end devices, making it preferable for privacy protection with local inference. 
Specifically, it eliminates the need for uploading sensitive data to the cloud service for model inference, which may become a privacy concern. 
It would be interesting to scale our method to large models.

Finally, it would be valuable to explore other specific deep learning models under differential privacy in a modular approach as we suggest. Potentially, the techniques from Bayesian learning may also be useful there.

\section*{Acknowledgements}
We thank the reviewers for  giving useful comments. 
We thank Gautam Kamath for encouraging us to explore differentially private deep learning.
We thank  David Evans and Yaodong Yu for helpful comments and  discussion. 

\section*{Impact Statement}
This paper presents work whose goal is to enhance personal privacy protection.




\newpage
\appendix
\onecolumn

\section{Related Work}
\label{apdx:related}
\subsection{Differential Private Deep Learning}
\citet{papernot2021tempered} suggested tempered sigmoid activations to control the gradient norm explicitly, and in turn support faster convergence in the settings of differentially private
ML. 
\citet{MohapatraSH0022} studied the intrinsic connection between the learning rate and clipping norm hyperparameters and show that adaptive optimizers like DPAdam enjoy a significant advantage in the process of honest hyperparameter tuning.
\citet{wei2022dpis} employed importance sampling (IS) in each SGD iteration for minibatch selection. 

One line of work focuses on adaptive optimization of differentially private machine learning. \citet{asi2021private} proposed adaptive stepsizes as a variant of SGD. \citet{li2022private} uses non-sensitive side information to
precondition the gradients, allowing the effective use of adaptive methods in private settings.
 
Another line of work studies the loss landscape of DP-SGD in comparison to SGD.
\citet{wang2021dplis} first highlighted the problem of DP-SGD being stuck in local minima due to the training instability. They constructed a smooth loss function that favors noise-resilient models lying in large flat regions of the loss landscape. 
\citet{shamsabadi2021losing} proposed that loss functions with smaller norm can reduce the impact of clipping and thus create a smoother loss function. \citet{park2023differentially} made use of sharpnessaware training without additional privacy costs.

The most related work is \citet{li2022large}, which finetunes large language models with differential privacy. They propose Ghost Clipping, which is a technique that enables efficient per-sample gradient clipping without instantiating per-sample gradient. Our Phantom Clipping can be viewed as an extension of Ghost Clipping that additionally handles parameter sharing of the embedding layer. They also introduce the idea of \emph{effective noise multiplier} in order to explain the role of batch size in private learning. Our \emph{effective error} (\Cref{eq:effectiveerror}) can be viewed as its generalization in order to account for the inherent input sparsity (i.e., only a small portion of tokens appear in one training sequence). \citet{yu2023vip} proposed to train a Vision Transformer model with differential privacy from scratch.

Another related work includes \citet{anil-etal-2022-large}, which establishes a baseline for BERT-Large pretraining with DP. They introduce several strategies to help private training, such as large weight decay and increasing batch size schedule. Notably, these strategies are independent yet complementary to the methodologies utilized in this work, thereby offering potential avenues for an integrated approach. 


The line of work~\cite{feyisetan2020privacy,mattern-etal-2022-limits,carvalho2023tem,utpala2023locally} focuses on privatization by pre-processing on the training texts and then using non-private training while our work focuses on private training with DP-SGD. Results are incomparable because they rely on a non-standard and generalization notion of privacy to preserve utility. The work~\cite{DPforward} proposes to differentially private fine-tuning Language Models in the forward pass, which can also be  regarded as privatization by pre-processing, but on the generalized data (i.e., features extracted by neural nets). The work~\cite{mai2024splitanddenoise} studies private inference of Language Models with local differential privacy.

Previous work has also considered the differentially private learning algorithms on heavy-tailed data~\cite{wang2020differentially,hu2022high,pmlr-v162-kamath22a}. This line of research is mainly concerned with differential private stochastic optimization (DP-SCO). Note that the notion of heavy-tailed there is different from the focus of this work. As pointed out in~\citet{pmlr-v162-kamath22a}, the setting they actually consider is dealing with heavy-tailed gradients due to unbounded values in the input data. 


\section{Phantom Clipping}
By unrolling the expression, ~\Cref{alg:pc} directly follows from the following claim.
\label{apdx:pfphan}
\begin{claim}
    (\textbf{Phantom Clipping})  For $1\leq i \leq B$, the norm of the per-sample gradient $\|g_{i, E}\|$ with respect to the shared embedding layer $E$ can be efficiently evaluated, without instantiating $g_{i, E}$, by
\begin{equation}
    \|g_{E}\|_i = \left( \langle (a_{\mathcal{S}})_i \cdot (a_{\mathcal{S}})_i^T, \nabla (e_{\mathcal{S}})_i \cdot \nabla (e_{\mathcal{S}})_i^T \rangle^2 + \|(\nabla e_{\mathcal{C}})_i\|^2 + 2 \cdot \langle \nabla (e_{\mathcal{S}})_i, (a_{\mathcal{S}})_i \cdot(\nabla e_{\mathcal{C}})_i \rangle  \right)^{\frac{1}{2}},
    \label{eq:phantom2}
\end{equation}
where $\langle\cdot, \cdot \rangle$ is the inner product of two matrices being of the same shape.
\end{claim}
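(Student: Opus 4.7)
The plan is to decompose the per-sample gradient with respect to the shared embedding layer $E$ into the two contributions arising from the two uses of $E$ in the forward pass, then expand the squared Frobenius norm, and finally apply trace identities so that each of the three resulting terms can be evaluated without materializing the $M\times d$ gradient matrix for each sample.

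Concretely, I would first use the chain rule together with the linearity of the input and output embedding maps to write, for each $i\in[B]$,
\begin{equation*}
g_{i,E} \;=\; g_{i,E}^{\text{in}} + g_{i,E}^{\text{out}} \;=\; (a_{\mathcal{S}})_i^{T}\,\nabla(e_{\mathcal{S}})_i \;+\; a_{\mathcal{C}}^{T}(\nabla e_{\mathcal{C}})_i,
\end{equation*}
where the first term comes from $\mathrm{InputEmbedding}_E$ and the second from $\mathrm{OutputEmbedding}_E$. Since $M'=M$ and $a_{\mathcal{C}}$ is the one-hot encoding of the full candidate set (i.e.\ the identity), the second term simplifies to $(\nabla e_{\mathcal{C}})_i$. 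Expanding the squared Frobenius norm gives three pieces:
\begin{equation*}
\|g_{i,E}\|^{2} \;=\; \|g_{i,E}^{\text{in}}\|^{2} + \|g_{i,E}^{\text{out}}\|^{2} + 2\,\langle g_{i,E}^{\text{in}},\, g_{i,E}^{\text{out}}\rangle.
\end{equation*}

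Next, I would rewrite each term using cyclicity of the trace so that only the small intermediates already available from backprop appear. For the input-path term,
\begin{equation*}
\|(a_{\mathcal{S}})_i^{T}\nabla(e_{\mathcal{S}})_i\|_{F}^{2} \;=\; \mathrm{tr}\bigl((a_{\mathcal{S}})_i(a_{\mathcal{S}})_i^{T}\,\nabla(e_{\mathcal{S}})_i\nabla(e_{\mathcal{S}})_i^{T}\bigr) \;=\; \langle (a_{\mathcal{S}})_i(a_{\mathcal{S}})_i^{T},\,\nabla(e_{\mathcal{S}})_i\nabla(e_{\mathcal{S}})_i^{T}\rangle,
\end{equation*}
which is exactly $\langle A_i, B_i\rangle$ in the algorithm (up to the exponent that appears in the displayed claim, which I would double-check against the $\|\cdot\|^{2}$ identity). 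The output-path term is simply $\|(\nabla e_{\mathcal{C}})_i\|_{F}^{2}$. For the cross term, applying trace cyclicity again yields
\begin{equation*}
\langle (a_{\mathcal{S}})_i^{T}\nabla(e_{\mathcal{S}})_i,\,(\nabla e_{\mathcal{C}})_i\rangle \;=\; \langle \nabla(e_{\mathcal{S}})_i,\,(a_{\mathcal{S}})_i(\nabla e_{\mathcal{C}})_i\rangle,
\end{equation*}
which is the $\langle\nabla(e_{\mathcal{S}})_i, C_i\rangle$ in the algorithm. Combining the three pieces gives the claimed formula, and the proof is complete.

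The main obstacle, such as it is, is really a bookkeeping issue rather than a conceptual one: I need to be careful that the backpropagated quantities $\nabla(e_{\mathcal{S}})_i$ and $(\nabla e_{\mathcal{C}})_i$ are defined as the partials of the per-sample loss through only one of the two occurrences of $E$ (so that the split $g_{i,E}=g_{i,E}^{\text{in}}+g_{i,E}^{\text{out}}$ holds by the multivariate chain rule through the shared parameter), and that none of the rewrites ever requires forming an $M\times d$ matrix per sample, only the $L\times L$, $L\times d$, and $M'\times d$ intermediates used in Steps~1--3 of \Cref{alg:pc}. A secondary sanity check is to verify that the memory complexity collapses to $O(BL^{2}+BLd)$ as claimed, since that is the whole point of avoiding instantiation of $g_{i,E}$.
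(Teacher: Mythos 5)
Your proposal is correct and follows essentially the same route as the paper's own proof: split $g_{i,E}$ via the chain rule into the input-embedding and output-embedding contributions, expand the squared Frobenius norm into three terms, apply the Ghost Clipping identity $\|(a_{\mathcal{S}})_i^{T}\nabla(e_{\mathcal{S}})_i\|_F^2=\langle (a_{\mathcal{S}})_i(a_{\mathcal{S}})_i^{T},\nabla(e_{\mathcal{S}})_i\nabla(e_{\mathcal{S}})_i^{T}\rangle$, use $a_{\mathcal{C}}=\mathbf{I}$ to reduce the output term to $\|(\nabla e_{\mathcal{C}})_i\|^2$, and simplify the cross term (your trace-cyclicity step is just a compact form of the paper's index-level computation). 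Your side remark about the exponent is well taken: the squared norm equals the inner product itself, so the first term should enter as $\langle A_i,B_i\rangle$ rather than $\langle A_i,B_i\rangle^{2}$, a slip that originates in the paper's intermediate equation writing $\|g_E^{(1)}\|$ where $\|g_E^{(1)}\|^{2}$ is meant.
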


\begin{proof}
For simplicity, we will omit the per-sample index $i$ throughout this proof and simply let batch size $B = 1$ without loss of generality.
From the chain rule, the per-sample gradient with respect to the embedding layer $E$ is
\begin{equation}
\begin{split}
    g_{E} &= 
    \frac{\partial \mathcal{L}}{\partial e_{\mathcal{S}}}\cdot \frac{\partial e_{\mathcal{S}}}{\partial E} + 
    \frac{\partial \mathcal{L}}{\partial e_{\mathcal{C}}}\cdot \frac{\partial e_{\mathcal{C}}}{\partial E}\\
    &=\underbrace{a_{\mathcal{S}}^T \cdot \nabla e_{\mathcal{S}}}_{g_{E}^{(1)}} + 
    \underbrace{a_{\mathcal{C}}^T \cdot \nabla e_{\mathcal{C}}}_{g_{E}^{(2)}},
\end{split}
\end{equation}
where $a_{s} \in \{0, 1\}^{L\times M}$ (or $a_{c} \in \{0, 1\}^{M\times M}$) is the one-hot encodings of the input sequence $s_i$ (or those of the candidate tokens for the output probability) in a minibatch, and $e_{s} \in \mathbb{R}^{L\times d}$ (or $e_{c} \in \mathbb{R}^{M\times d}$) be output of the (shared) embedding layer $E$ when fed into $a_{s}$ (or $a_{c}$).
Denote the first segment of the right-hand side (RHS) as $g_{E}^{(1)}$, the second segment of the RHS as $g_{E}^{(2)}$. Then we have
\begin{equation}
     \left\|g_{E}\right\|^2 = \left\|g_{E}^{(1)} + g_{E}^{(2)}\right\|^2 =  
     \left\|g_{E}^{(1)}\right\|^2 + \left\| g_{E}^{(2)}\right\|^2
     + 2\cdot \langle g_{E}^{(1)}, g_{E}^{(2)}\rangle.
     \label{eq:breakdown}
\end{equation}
Ghost Clipping~\citep{li2022large} allows us to evaluate $\left\|g_{ E}^{(1)}\right\|_F$  without instantiating $g_{E}^{(1)}$, the formula is given by
\begin{equation}
    \begin{split}
        \left\|g_{E}^{(1)}\right\| = \langle a_{\mathcal{S}}a_{\mathcal{S}}^T, \nabla e_{\mathcal{S}} \nabla e_{\mathcal{S}}^T \rangle.
    \end{split}
    \label{eq:phantom_p_1}
\end{equation}
Likewise, we have
\begin{equation}
    \begin{split}
        \left\|g_{E}^{(2)}\right\| = \langle a_{\mathcal{C}}a_{\mathcal{C}}^T, \nabla e_{\mathcal{C}} \nabla e_{\mathcal{C}}^T \rangle.
    \end{split}
    \label{eq:ghost_emb}
\end{equation}
With appropriate implementation, $a_{\mathcal{C}}$ is the one-hot encoding of $[1, 2, 3, ..., M]$, thus $a_{\mathcal{C}}$ is an identity matrix and \Cref{eq:ghost_emb} can be further simplified as
\begin{equation}
    \begin{split}
        \left\|g_{E}^{(2)}\right\|= \langle \mathbf{I}, \nabla e_{\mathcal{C}} \nabla e_{\mathcal{C}}^T \rangle = \sum_{j,k} (\nabla e_{\mathcal{C}})_{j,k}\cdot (\nabla e_{\mathcal{C}})_{j,k}  =  \|\nabla e_{\mathcal{C}}\|^2.
    \end{split}
    \label{eq:phantom_p_2}
\end{equation}
Note that this simplification saves us the memory footprint of $O(B M'^2)$ for evaluating $\nabla e_{\mathcal{C}} \nabla e_{\mathcal{C}}^T$ in \Cref{eq:ghost_emb}.

Therefore, computing the gradient norm of shared embedding reduces to computing $\langle g_{E}^{(1)}, g_{E}^{(2)}\rangle$ in \Cref{eq:breakdown},
\begin{equation}
\begin{split}
    \langle g_{E}^{(1)}, g_{E}^{(2)}\rangle 
    &= \langle a_{\mathcal{S}}^T \cdot  \nabla e_{\mathcal{S}}, a_{\mathcal{C}}^T \cdot \nabla e_{\mathcal{C}}\rangle \\
    &= \sum_{j=1}^{M}\sum_{k=1}^{d}\left(\sum_{i=1}^{L} (a_{\mathcal{S}})_{ij} \cdot (\nabla e_{\mathcal{S}})_{ik}\right)\left(\sum_{i=1}^{M'} (a_{\mathcal{C}})_{ij} \cdot (\nabla e_{\mathcal{C}})_{ik}\right)\\
    &= \sum_{i_1=1}^{L}\sum_{i_2=1}^{M'} \left(\sum_{j=1}^{M}\sum_{k=1}^{d} (a_{\mathcal{S}})_{i_1j} \cdot (\nabla e_{\mathcal{S}})_{i_1k} \cdot (a_{\mathcal{C}})_{i_2j} \cdot (\nabla e_{\mathcal{C}})_{i_2k} \right) \\
    &= \sum_{i_1=1}^{L}\sum_{i_2=1}^{M'}\left(\sum_{j=1}^{M} (a_{\mathcal{S}})_{i_1j} \cdot (a_{\mathcal{C}})_{i_2j}\right)\left(\sum_{k=1}^{d} (\nabla e_{\mathcal{S}})_{i_1k} \cdot (\nabla e_{\mathcal{C}})_{i_2k}\right)\\
    &= \sum_{i_1=1}^{L}\sum_{i_2=1}^{M'}\langle (a_{\mathcal{S}})_{i_1}, (a_{\mathcal{C}})_{i_2}\rangle \cdot \langle (\nabla e_{\mathcal{S}})_{i_1}, (\nabla e_{\mathcal{C}})_{i_2}\rangle\\
    &= \sum_{i_1=1}^{L}\sum_{i_2=1}^{M'}\mathbb{I}[i_2 = \operatorname{onehot}^{-1}((a_{\mathcal{S}})_{i_1})] \cdot \langle (\nabla e_{\mathcal{S}})_{i_1}, (\nabla e_{\mathcal{C}})_{i_2}\rangle\\
    &= \sum_{i_1=1}^{L}\langle (\nabla e_{\mathcal{S}})_{i_1}, (a_\mathcal{S})_{i_1} \cdot \nabla  e_{\mathcal{C}}\rangle\\
    &= \langle (\nabla e_{\mathcal{S}}), a_\mathcal{S}\cdot (\nabla e_{\mathcal{C}})\rangle.\\
\end{split}
\label{eq:phantom_p_3}
\end{equation}
Combining \Cref{eq:phantom_p_1}, \Cref{eq:phantom_p_2} and ~\Cref{eq:phantom_p_3} yields the conclusion.
\end{proof}

\section{Re-Attention Mechanism}

\subsection{Derivation of \Cref{cl:errorinst2}}
\label{apdx:errorinst}

Taking the average over the minibatch in \Cref{eq:DPSGD} can be considered noise reduction of $O(1/B)$. Fix the noise multiplier $\sigma_{dp}$. As the size of the batch increases, the amount of DP noise incorporated into the parameter decreases correspondingly. Suppose that token $i$ is absent from the current training sequence. Its input embedding will not be activated and thus will not be properly trained in this iteration, but the DP noise will be nevertheless injected into its embedding. The concept of effective error is introduced to account for this phenomenon.

\begin{proof}
    It reduces to derive the formula for effective batch size $B_{\eff}^{\theta}$ in \Cref{eq:effectiveerror}.
    Recall that its definition is given by
    \begin{equation}
        B_{\eff}^{\theta} = \E_{\mathcal{B}\stackrel{\text{\tiny{i.i.d.}}}{~\sim~}\mathcal{D}^B} \left[\sum_{i=1}^B \mathbb{I}\left[ R_{\theta}(\mathcal{B}_i)\right]\right].
    \end{equation}
    For each layer parameterized by $W$ within the Transformer block, its effective batch size is $B_{\eff}^{W} = B$, since $R_{W}(\mathcal{B}_i) = 1$.

    For the embedding layer $E$, its effective batch size is
    \begin{equation}
    \begin{split}
         B_{\eff}^{E_i} &=  \E_{\mathcal{B}\stackrel{\text{\tiny{i.i.d.}}}{~\sim~}\mathcal{D}^B} \left[\sum_{j=1}^B \mathbb{I}\left[ R_{E_i}(\mathcal{B}_j)\right]\right]\\
         &= \sum_{j=1}^B  \E_{\mathcal{B}_j\stackrel{\text{\tiny{i.i.d.}}}{~\sim~}\mathcal{D}} \left[ \mathbb{I}\left[ R_{E_i}(\mathcal{B}_j)\right]\right] ~~~~\text{(Linearity of Expectation)}\\
         &= \sum_{j=1}^B  \E_{\mathcal{B}_j\stackrel{\text{\tiny{i.i.d.}}}{~\sim~}\mathcal{D}} \left[ \mathbb{I}\left[ {\rm token}~i\in \mathcal{B}_j \right]\right]\\
         &= \sum_{j=1}^B  p_i\\
         &= B\cdot p_i,\\
    \end{split}
    \end{equation}
    where $p_i$ is the frequency of token $i$ (i.e., the probability of token $i$'s occurrence in data). 
\end{proof}

\subsection{Propagation of Natural Parameters}
\label{apdx:npnprop}
For linear transformation, $X^{(l)}=X^{(l-1)}W$, we can propagate the variance as
\begin{equation}
    \sigma_{X^{(l)}}^2 = \sigma_{X^{(l-1)}}^2 \cdot \sigma_W^2 + \sigma_{X^{(l-1)}}^2 \cdot \mu_W^2 +  \sigma_W^2 \cdot (\mu_{X^{(l-1)}})^2
    \label{eq:linearnpn}.
\end{equation}
For nonlinear activation functions, e.g., $X^{(l)}=\operatorname{ReLU}\left(X^{(l-1)}\right)$, we can propagate the variance as
\begin{equation}
\begin{split}
    \sigma_{X^{(l)}}^2 = \Phi(\frac{c}{\sqrt{d}})(c^2+d) + \frac{c\sqrt{d}}{\sqrt{2\pi}}\exp(-\frac{1}{2}\frac{c^2}{d}) - c^2,
\end{split}    
\label{eq:relunpn}
\end{equation}
where $\Phi(\cdot)$ is the cumulative density function (CDF) of the standard Gaussian distribution, $c$ and $d$ are the natural parameter of  $X^{(l-1)}$.

\label{apdx:npnproof}
\begin{lemma}
    Let $X$, $Y$ be two independent random variables, $Z=XY$, then the variance of $Z$ can be expressed as
    \begin{equation}
        \begin{split}
            \var[Z] = \var[XY] =  \E[X^2]\E[Y^2] - \E[XY]^2.
        \end{split}
    \end{equation}
    \label{le:prod_var}
\end{lemma}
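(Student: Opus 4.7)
The plan is to unfold the definition of variance and then exploit independence to factor the second moment. Concretely, I would start from $\var[Z] = \E[Z^2] - \E[Z]^2$, substitute $Z = XY$ to obtain
\begin{equation}
\var[XY] = \E[X^2 Y^2] - \E[XY]^2,
\end{equation}
and then use the fact that since $X$ and $Y$ are independent, so are $X^2$ and $Y^2$ (they are deterministic functions of independent variables). Independence gives $\E[X^2 Y^2] = \E[X^2]\,\E[Y^2]$, and substituting back yields exactly the claimed identity.

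There is essentially no obstacle here: the statement is a one-line consequence of (i) the definition of variance and (ii) the factorization of expectations for independent random variables. The only subtle point worth flagging explicitly is the passage from independence of $X,Y$ to independence of $X^2, Y^2$, which I would justify by noting that measurable functions of independent random variables remain independent. Note that on the right-hand side the author keeps $\E[XY]^2$ rather than rewriting it as $\E[X]^2\E[Y]^2$; this is equivalent under independence via $\E[XY] = \E[X]\E[Y]$, so no further manipulation is needed to match the stated form.

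Because the lemma is invoked downstream to propagate second-order statistics through the linear layer (cf.\ \Cref{eq:linearnpn}, where $X$ plays the role of the incoming activation and $Y$ the role of a weight entry treated as an independent Gaussian), I would also briefly remark in the proof that the identity implies the mnemonic decomposition
\begin{equation}
\var[XY] = \var[X]\var[Y] + \var[X]\,\E[Y]^2 + \var[Y]\,\E[X]^2,
\end{equation}
obtained by writing $\E[X^2] = \var[X] + \E[X]^2$ and likewise for $Y$. This makes the connection to \Cref{eq:linearnpn} transparent without requiring any additional work.
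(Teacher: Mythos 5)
Your proof is correct and is exactly the standard argument the paper relies on: the paper states this lemma without an explicit proof (treating it as a known fact), and your one-line derivation via $\var[Z]=\E[Z^2]-\E[Z]^2$ together with $\E[X^2Y^2]=\E[X^2]\E[Y^2]$ from independence is precisely what is needed. Your closing remark expanding $\E[X^2]=\var[X]+\E[X]^2$ to obtain $\var[XY]=\var[X]\var[Y]+\E[X]^2\var[Y]+\E[Y]^2\var[X]$ reproduces, step for step, the paper's own proof that the lemma implies \Cref{eq:linearnpn}.
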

\Cref{le:prod_var} directly implies \Cref{eq:linearnpn} as follows.
\begin{proof}
    Suppose the linear transformation is given by $X^{(l)}=X^{(l-1)}W$, then we have
    \begin{equation}
    \begin{split}
        \var[X^{(l)}] &= \E[(X^{(l-1)})^2]\E[W^2] - \E[X^{(l-1)}W]^2\\
        &= (\E[X^{(l-1)}]^2 + \var[X^{(l-1)}])(\E[W]^2 + \var[W])- \E[X^{(l-1)}]^2\E[W]^2\\
        &= \var[X^{(l-1)}]\var[W] + \E[X^{(l-1)}]^2\var[W] + \E[W]^2\var[X^{(l-1)}].\\
    \end{split}
    \end{equation}
\end{proof}

\begin{lemma}
\label{le:max_var}
    Let $X_1$, $X_2$ be two independent Gaussian random variables, where $X_i\sim \mathcal{N}(\mu_i, \sigma_i),i=1,2$. Let $Z=\operatorname{max}(X_1, X_2)$.
    \begin{equation}
        \begin{split}
            &\E[Z] = \mu_1 \Phi(\gamma) + \mu_2\Phi(-\gamma) + \nu\phi(\gamma)\\
            &\E[Z^2] = (\mu_1^2+\sigma_1^2) \Phi(\gamma) + (\mu_2^2 + \sigma_2^2)\Phi(-\gamma) + (\mu_1 + \mu_2)\nu\phi(\gamma),
        \end{split}
        \label{eq:max_var}
    \end{equation}
    where $\Phi(\cdot)$ is the cumulative density function (CDF) of the standard Gaussian distribution, $\phi(\cdot)$ is the probability density function (PDF) of the standard Gaussian distribution, $\nu$ = $\sqrt{\sigma_1^2 + \sigma_2^2}$, and $\gamma=(\mu_1-\mu_2)/\nu$.
\end{lemma}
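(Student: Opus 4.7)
The plan is to reduce both moments of $Z$ to computations involving a single scalar Gaussian $D := X_1 - X_2$. By independence of $X_1, X_2$, we have $D \sim \mathcal{N}(\mu_1 - \mu_2,\, \nu^2)$, so after standardization the probabilities $\P[X_1 > X_2] = \Phi(\gamma)$ and $\P[X_1 \le X_2] = \Phi(-\gamma)$ drop out immediately, accounting for the $\Phi(\pm\gamma)$ factors in the statement.

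The first step is to write $Z = X_1\,\mathbb{I}[D > 0] + X_2\,\mathbb{I}[D \le 0]$ and decompose each $X_i$ orthogonally against $D$ using the bivariate Gaussian projection $X_1 = \mu_1 + (\sigma_1^2/\nu^2)(D - \E D) + Y_1$, where $Y_1$ is a centered Gaussian \emph{independent} of $D$ (its correlation with $D$ having been subtracted off), and symmetrically $X_2 = \mu_2 - (\sigma_2^2/\nu^2)(D - \E D) + Y_2$. The $Y_i$ summands contribute nothing to $\E[X_1\,\mathbb{I}[D > 0]]$ or $\E[X_2\,\mathbb{I}[D \le 0]]$ by independence combined with $\E[Y_i]=0$. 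What remains is the standard truncated-Gaussian first moment $\E[(D - \E D)\,\mathbb{I}[D > 0]] = \nu\phi(\gamma)$, obtained via the substitution $u = (D - \E D)/\nu$ and the antiderivative $-\phi$ of $u\phi(u)$. Summing the two sides, the coefficients $(\sigma_1^2 + \sigma_2^2)/\nu$ collapse to $\nu$, yielding the stated $\E[Z]$.

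For $\E[Z^2]$ I would apply the same decomposition but expand the square. Cross terms containing $Y_i$ drop out by independence and centering of $Y_i$, and the surviving integrals are $\P[D > 0]$, the same first-order truncated moment as above, and the second-order truncated moment $\E[(D - \E D)^2\,\mathbb{I}[D > 0]] = \nu^2(\Phi(\gamma) - \gamma\phi(\gamma))$. This last identity follows from integration by parts using $\phi'(u) = -u\phi(u)$. After substituting $\Var Y_i = \sigma_1^2\sigma_2^2/\nu^2$, the $\Phi$ contributions from the three surviving terms telescope exactly into $(\mu_i^2 + \sigma_i^2)\Phi(\pm\gamma)$.

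The one nontrivial step, where I expect the actual work to lie, is showing that the combined coefficient of $\phi(\gamma)$ in $\E[Z^2]$ reduces to exactly $(\mu_1 + \mu_2)\nu$. Collected naively this coefficient takes the form $\tfrac{2(\mu_1\sigma_1^2 + \mu_2\sigma_2^2)}{\nu} + \tfrac{(\mu_1 - \mu_2)(\sigma_2^4 - \sigma_1^4)}{\nu^3}$; factoring $\sigma_2^4 - \sigma_1^4 = (\sigma_2^2 - \sigma_1^2)\nu^2$ eats one power of $\nu^2$, and after expansion the numerator rearranges via $\mu_1\sigma_1^2 + \mu_2\sigma_2^2 + \mu_1\sigma_2^2 + \mu_2\sigma_1^2 = (\mu_1 + \mu_2)\nu^2$. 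This algebraic telescoping is the only delicate part of the argument; everything else is routine bookkeeping built on two standard truncated-Gaussian integrals, so no conceptual difficulty is anticipated.
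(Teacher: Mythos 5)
Your proposal is correct, but note that the paper itself offers no proof of this lemma to compare against: it states the result as a known fact (it is Clark's classical formula for the moments of the maximum of two Gaussians), and the proof environment that follows the lemma in the appendix establishes a different claim, namely that the lemma implies the ReLU variance-propagation formula \eqref{eq:nonprop}. Your derivation is sound and self-contained: the orthogonal decomposition $X_1 = \mu_1 + (\sigma_1^2/\nu^2)(D-\E D) + Y_1$ and $X_2 = \mu_2 - (\sigma_2^2/\nu^2)(D-\E D) + Y_2$ against $D = X_1 - X_2$ is valid (the covariances are $\pm\sigma_i^2$, and joint Gaussianity upgrades uncorrelatedness of $Y_i$ with $D$ to independence), the two truncated-moment identities $\E[(D-\E D)\mathbb{I}[D>0]] = \nu\phi(\gamma)$ and $\E[(D-\E D)^2\mathbb{I}[D>0]] = \nu^2(\Phi(\gamma)-\gamma\phi(\gamma))$ are correct, and $\Var[Y_i]=\sigma_1^2\sigma_2^2/\nu^2$ makes the $\Phi(\pm\gamma)$ coefficients collapse to $\mu_i^2+\sigma_i^2$ exactly as you claim. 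I also checked the one step you flag as delicate: the collected $\phi(\gamma)$ coefficient $\frac{(\mu_1-\mu_2)(\sigma_2^4-\sigma_1^4)}{\nu^3} + \frac{2(\mu_1\sigma_1^2+\mu_2\sigma_2^2)}{\nu}$ does simplify to $(\mu_1+\mu_2)\nu$ via $(\mu_1-\mu_2)(\sigma_2^2-\sigma_1^2) + 2\mu_1\sigma_1^2 + 2\mu_2\sigma_2^2 = (\mu_1+\mu_2)\nu^2$. The only cosmetic caveat is that the lemma writes $X_i\sim\mathcal{N}(\mu_i,\sigma_i)$ while the formulas require $\sigma_i^2$ to be the variance; your proof adopts the correct reading.
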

\Cref{le:max_var} directly implies \Cref{eq:nonprop} as follows.
\begin{proof}
    Let $X^{(l)}=\operatorname{ReLU}\left(X^{(l-1)}\right)$ be the ReLU activation. Substitute $\mu_1=\E[X^{(l-1)}], \sigma_2 =\var[X^{(l-1)}], \mu_2=\sigma_2 = 0$ into \Cref{eq:max_var}. Leveraging $\var[X^{(l)}]=\E[(X^{(l)})^2] -\E[X^{(l)}]^2$  yields the conclusion.
\end{proof}
\begin{remark}
    \textbf{GELU activation.} GELU function is another widely used activation function within Transformer models. GELU can be viewed as a smooth version of ReLU (see \Cref{apdx:fig:gelu}), where their forward propagation is similar, and the major distinction lies in the numerical behavior of backpropagation. Since error propagation is only concerned with forward propagation behavior, we can also use \Cref{eq:nonprop} to approximate the variance of GELU output. \Cref{apdx:tb:analytic} shows the analytic error propagation for ReLU and GELU activation, compared with the sampling-based results.
\end{remark}
\begin{figure}[ht]
\vspace{-0.5cm}
\centering
\setlength{\abovecaptionskip}{0.cm}
\includegraphics[width=8cm]{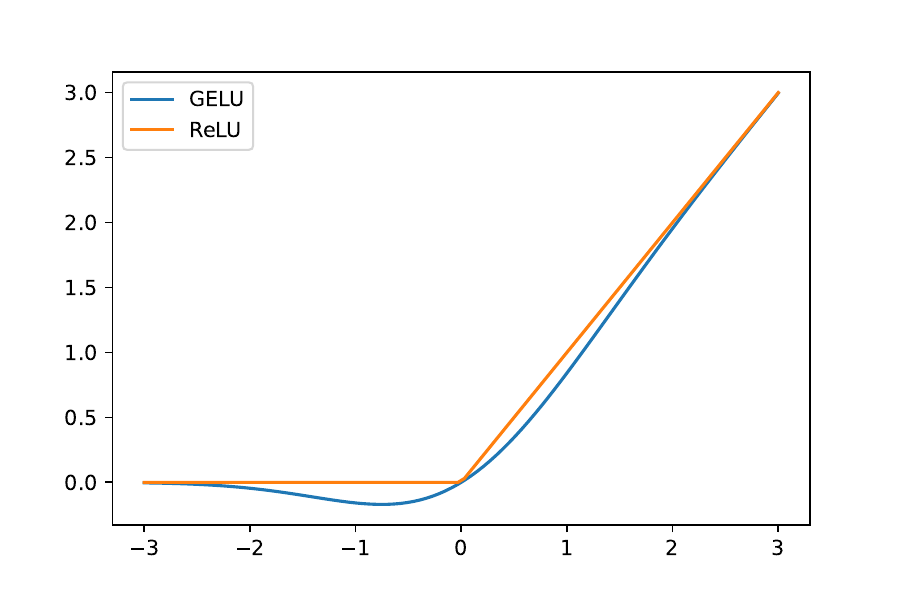}

\caption{GELU activation and ReLU activation.}
\label{apdx:fig:gelu}

\end{figure}

\begin{table}[ht]
\scriptsize
\centering
\caption{Analytic error propagation for ReLU and GELU activation.}
\begin{tabular}{@{}c|c|cccccc|c@{}}
\toprule
\multirow{2}{*}{Input}      & \multirow{2}{*}{Activation} & \multicolumn{6}{c|}{Sampling-based}                       & \multirow{2}{*}{Analytic} \\
                            &                             & 10      & 100     & 1000    & 10000   & 100000  & 1000000 &                           \\ \midrule\midrule
\multirow{2}{*}{$\mathcal{N}(0, 0.01)$} & \textsc{ReLU}                        & 4.08e-6 & 3.60e-5 & 3.93e-5 & 3.45e-5 & 3.40e-5 & 3.40e-5 & \multirow{2}{*}{\textbf{3.40e-5}}  \\
                            & \textsc{GELU}                         & 2.48e-5 & 2.69e-5 & 2.72e-5 & 2.57e-5 & 2.50e-0 & 2.49e-5 &                           \\ \midrule
\multirow{2}{*}{$\mathcal{N}(0, 0.1)$}  & \textsc{ReLU}                        & 0.0030  & 0.0031  & 0.0037  & 0.0034  & 0.0035  & 0.0034  & \multirow{2}{*}{\textbf{0.0034}}   \\
                            & \textsc{GELU}                        & 0.0030  & 0.0025  & 0.0027  & 0.0025  & 0.0026  & 0.0025  &                           \\ \midrule
\multirow{2}{*}{$\mathcal{N}(0, 1)$}    & \textsc{ReLU}                        & 0.5299  & 0.2361  & 0.3649  & 0.3451  & 0.3387  & 0.3418  & \multirow{2}{*}{\textbf{0.3408}}   \\
 & \textsc{GELU}  & 0.5525  & 0.2306  & 0.3719  & 0.3506  & 0.3433  & 0.3467  &  \\ \bottomrule
\end{tabular}
\label{apdx:tb:analytic}
\end{table}




\section{Empirical Evaluation}
\label{apdx:exp}
\subsection{Datasets}
\label{apdx:exp:dataset}
\begin{figure}[ht]
\centering
\setlength{\abovecaptionskip}{0.cm}
\includegraphics[width=7cm]{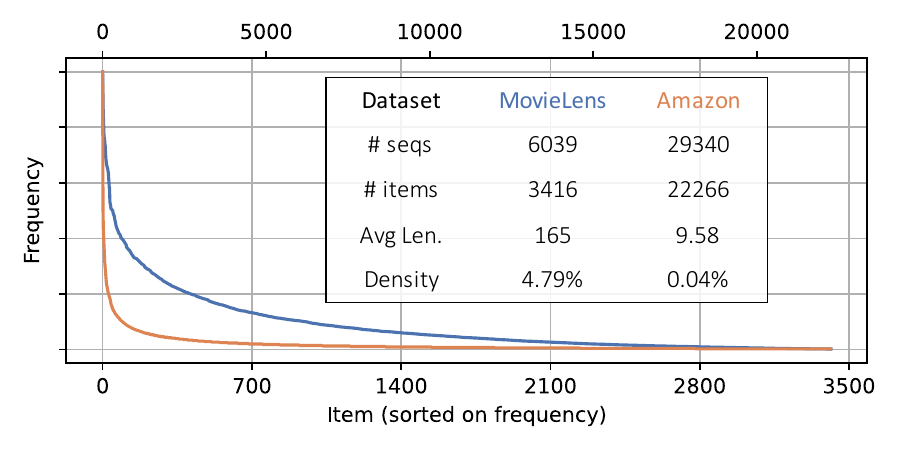}
\caption{Data in the real-world scenarios exhibits long-tailed (also known as, power-law) distribution.
}
\vspace{-0.2cm}
\label{fig:dataset}
\end{figure}
We conduct experiments on two public recommendation datasets collected from real-world scenarios: MovieLens~\citep{harper2015movielens} and Amazon~\citep{mcauley2015image}. \Cref{fig:dataset} shows their data distributions, illustrating the prevalence of long-tailed distributions, where a small number of items are extremely popular and have relatively high frequency while other items occur infrequently. The embedded table above the `long tail' reports the statistics of the two datasets, showing that the two datasets vary significantly in size and sparsity.

\Cref{fig:dataset} shows their data distributions, illustrating the prevalence of long-tailed distributions, where a small number of items are extremely popular and have relatively high frequency while other items occur infrequently. The embedded table above the `long tail' reports the statistics of the two datasets, showing that the two datasets vary significantly in size and sparsity.

\textbf{MovieLens.} The MovieLens dataset~\citep{harper2015movielens} is often used in the development and evaluation of collaborative filtering algorithms, which are used to make personalized recommendations based on user behavior. It is a benchmark dataset in the field of recommender systems due to its size, longevity, and richness of user-item interactions. We use the version (MovieLens-1M) that includes 1 million user behaviors.

\textbf{Amazon.} A series of datasets introduced in~\citep{mcauley2015image}, comprising large corpora of product reviews crawled from
Amazon.com. Top-level product categories on Amazon are
treated as separate datasets. We consider the ‘Games.’ category. This dataset is notable for its high sparsity and variability.

We follow \citep{kang2018self} for the data preprocessing. We use timestamps to determine the sequence order of
actions. Each user is associated with a training sequence (i.e., his chronological behavior). We discard users and items with fewer than five related actions. For data partitioning, the last token of each sequence is left for testing.

It is worth noting that since each user is exclusively associated with exactly one training sample (sequence) in the training data, the DP guarantee we provide is user-level. That is, removing all information pertaining to a specific user  yields an indistinguishable model.

\subsection{Model Architecture}
\label{apdx:exp:preprocessing}
We use the standard Transformer encoder described in~\citep{vaswani2017attention}. The model dimension is set to 64. The number of heads in the Attention Mechanism is set to 1. The number of Transformer blocks is set to 2. Our model adopts a learned (instead of fixed) positional embedding. The model size is similar to that in \cite{ramaswamy2020training}, which is suitable for deployment on the user devices.

\subsection{Hyperparameters}
The number of epochs is set to 100. The batch size is chosen from $\{256, 512, 1024, 2048, 4096\}$. The learning rate is chosen from $\{10^{-3}, 3\times10^{-3}, 5\times10^{-3},  7\times10^{-3},  9\times10^{-3}\}$. The dropout rate is 0.2 for MovieLens and 0.5 for Amazon (due to its high sparsity). We use the Adam optimizer with a weight decay of $10^{-5}$.

\subsection{Evaluation Metrics}
HIT@$k$ measures whether the relevant (i.e., ground truth) item is present within the top-$k$ items in prediction list. It is a binary metric indicating the presence (hit) or absence of relevant items.
\begin{equation}
\text{HIT@$k$} = \begin{cases} 
1, & \text{if the next item is in top-$k$ prediction} \\
0, & \text{otherwise} 
\end{cases}
\end{equation}

NDCG@$k$ measures the performance of a recommendation system based on the graded relevance of the recommended items. It is normalized based on the ideal order of items.
\begin{equation}
    \text{NDCG@$k$} = \frac{\text{DCG@$k$}}{\text{IDCG@$k$}}
\end{equation}
where
\begin{equation}
\text{DCG@$k$} = \sum_{i=1}^{k} \frac{2^{rel_i} - 1}{\log_2(i+1)}
\end{equation}
Here, $rel_i$ is the relevance score of the item at position $i$ in the recommendation list. Namely, if the $i$-th item (sorted by prediction score of the model) is equal to the ground truth, $rel_i=1$ otherwise 0.
IDCG@$k$ (Ideal Discounted Cumulative Gain at $k$) is the maximum possible 
DCG@$k$, obtained by placing the most relevant items in the top positions (i.e., the item with highest prediction score is equal to the ground truth). It is calculated similarly to DCG@$k$ but for the ideal order.

We adhere to the evaluation method advocated in~\cite{krichene2020sampled}, i.e., ranking all the items rather than adopting the sampled metrics where only a smaller set of random items and the relevant items are ranked. Note that under this evaluation method the accuracy value is greatly lower than that obtained by sampled metrics, but is more consistent and meaningful.

\subsection{Empirical Evaluation of Phantom Clipping}
\label{apdx:phanempirical}
To show the importance of parameter sharing when training Transformer models with DP-SGD, we conduct experiments under the following three settings: (1) parameter sharing of the embedding layer, which aligns with the standard treatment in Transformer; (2) no parameter sharing; and (3) no parameter sharing coupled with a reduced embedding dimension by half.
Note that the third setting is included to account for the potential impact of model dimension on accuracy in private training, given the difference in the number of parameters between models with and without parameter sharing. Model performance across different hyperparameters is shown in \Cref{fig:embsharing}. The consistency and significance of the performance improvement brought by parameter sharing during private training are not hard to perceive. The essence of embedding sharing lies in the assumption that, by tying the embedding of the input and output layers, the representation of each token remains consistent throughout its retrieval. This inductive bias enhances the statistical efficiency of the model, enabling improved generalization. When training with DP-SGD on limited training data, the model must independently uncover this relationship from the noisy gradients with a low signal-to-noise ratio, heightening the convergence challenge.

\begin{figure}[ht]

\centering
\begin{subfigure}{0.3\textwidth}
\centering
\includegraphics[height=3.5cm]{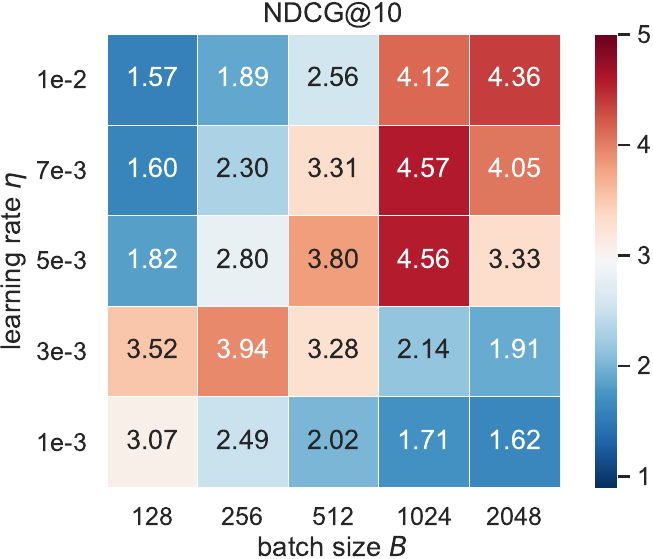}
\caption{parameter sharing}
\label{fig:heat1}
\end{subfigure}
\begin{subfigure}{0.3\textwidth}
\centering
\includegraphics[height=3.5cm]{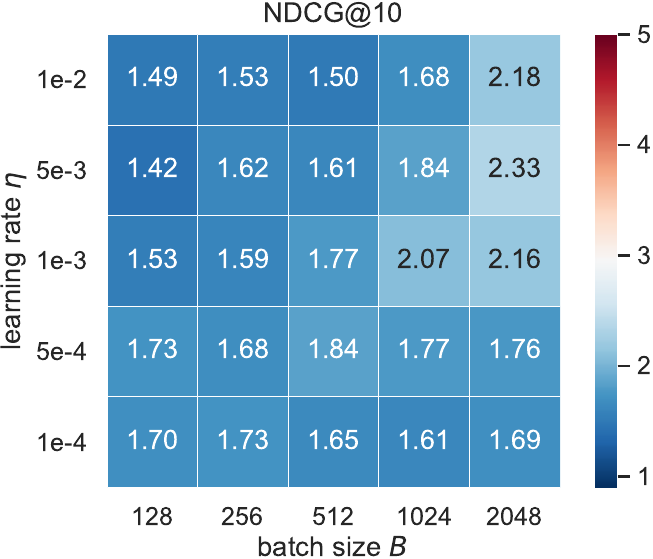}
\caption{w/o parameter sharing}
\label{fig:heat2}
\end{subfigure}
\begin{subfigure}{0.3\textwidth}
\centering
\includegraphics[height=3.5cm]{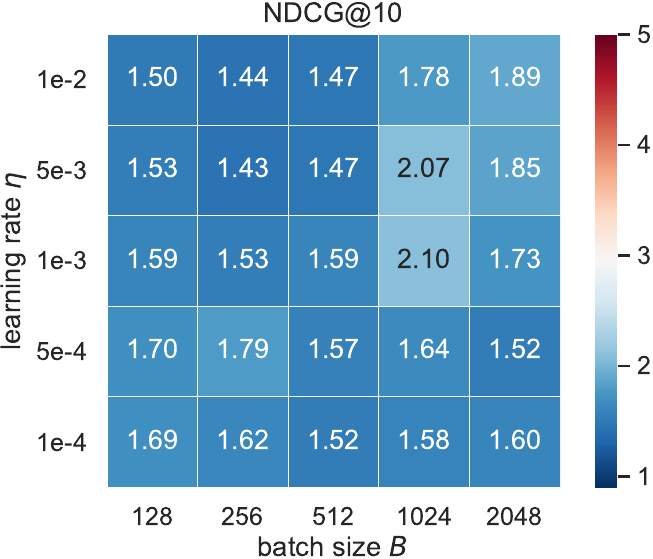}
\caption{halved dimension in (b)}
\label{fig:heat3}
\end{subfigure}
\caption{Numbers are NDCG(\%)@10 (higher is better) of the privately trained model (with $\varepsilon$ set to 5) on MovieLens (\Cref{fig:dataset}). Parameter sharing for the embedding layer yields consistent and significant performance gains over the non-sharing setting in private training. The optimal hyperparameter configuration is always using a large batch size (with a large learning rate).}
\label{fig:embsharing}
\end{figure}

\begin{figure}[ht]
\vspace{-0.3cm}
\centering
\begin{subfigure}{0.48\textwidth}
\centering
\setlength{\abovecaptionskip}{0.cm}
\includegraphics[width=5cm]{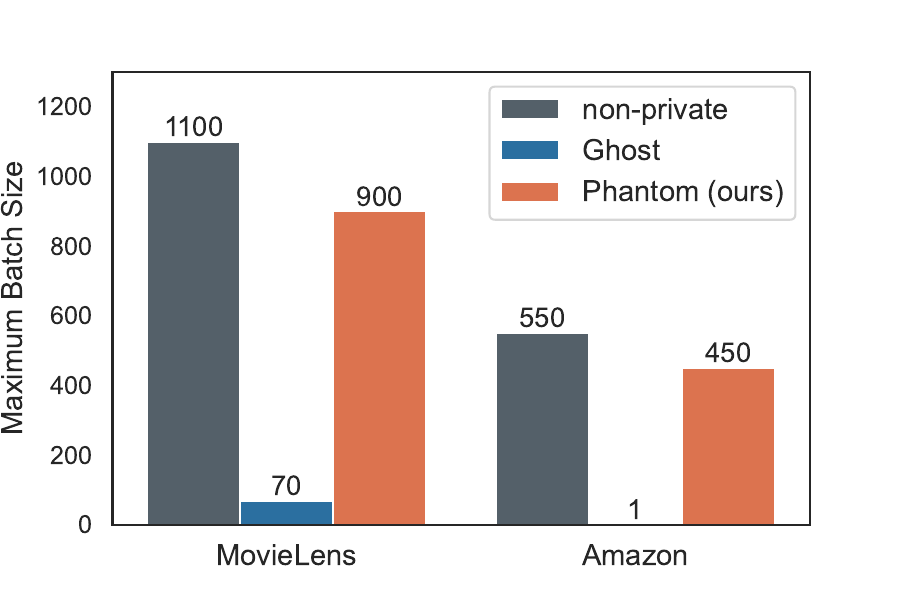}
\caption{Memory efficiency}
\label{fig:Memory}
\end{subfigure}
\begin{subfigure}{0.48\textwidth}
\centering
\setlength{\abovecaptionskip}{0.cm}
\includegraphics[width=5cm]{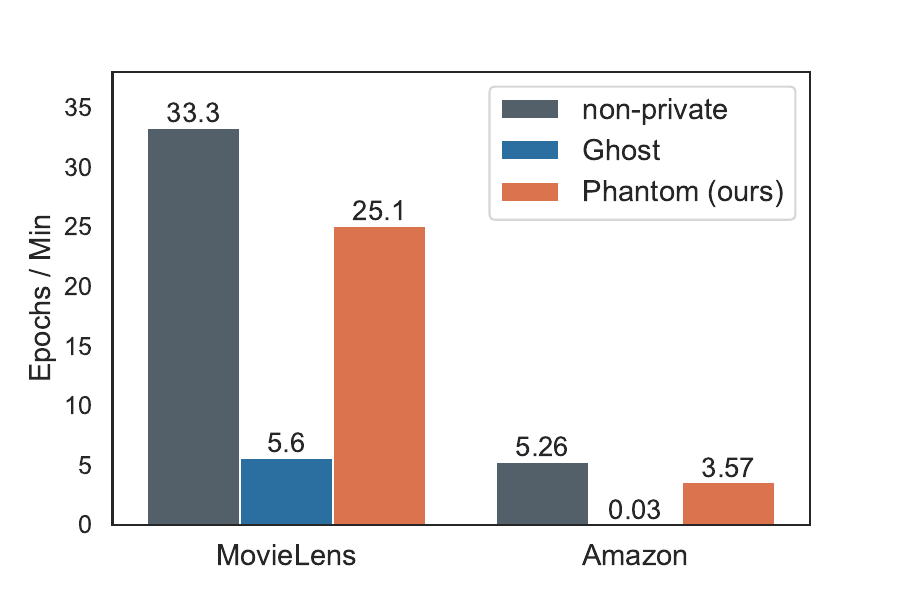}
\caption{Training speed}
\label{fig:Throughput}
\end{subfigure}

\label{fig:phantom}

\caption{Empirical speedup on small models, compared to the Ghost Clipping.  \textbf{Left:} Phantom Clipping is 10-400$\times$ more memory efficient than Ghost Clipping and is almost as efficient as non-private training. \textbf{Right:} Phantom Clipping is 4-100$\times$ faster than Ghost Clipping, having comparable training speed with non-private training.}
\vspace{-0.2cm}
\end{figure}

\noindent \textbf{Empirical Speedup.} 
As we mentioned, in addition to support standard Transformer models with embedding sharing, our Phantom Clipping can achieve great speedup than Ghost Clipping~\cite{li2022large} when the model is relatively small.
This is because the Ghost Clipping is a general method for all layers, not specialized for the embedding layer. However, when the model size is small, the dominating overhead will be at the embedding layer.

We implement our Phantom Clipping based on AWS's fastDP\footnote{\url{https://github.com/awslabs/fast-differential-privacy}} library, which has implemented Ghost Clipping.
We then empirically compare our Phantom Clipping with Ghost Clipping in terms of both memory footprint and training speed on real-world datasets\footnote{Since Ghost Clipping does not support parameter sharing, its results are obtained from training models without embedding sharing. This leads to more model parameters. For a fair comparison, we halve its embedding dimension to $d_E/2$, ending up with a similar number of parameters as in the model with embedding sharing.} (see \Cref{fig:dataset} for details of the datasets).
\Cref{fig:Memory} shows the maximum batch size that can fit into a  Tesla V100 GPU (16 GB of VRAM). It can be seen that our technique is much more memory friendly.
It allows up to $450\times$ larger batch size compared with Ghost Clipping on Amazon, almost as large as those in non-private training.
\Cref{fig:Throughput} shows the training speed on a single Tesla V100 GPU. It allows up to $100\times$ training speedup in practice compared to Ghost Clipping, achieving  0.68$\times$ training speed of the non-private version.

\subsection{Empirical Evaluation of the Re-Attention Mechanism}
\label{apdx:empiricalreattn}

\textbf{Baselines and Implementation Details.} 
We compare our method with vanilla Transformer~\citep{vaswani2017attention} (i.e., the one without Re-Attention Mechanism), vanilla Transformer without parameter sharing, GRU~\citep{cho2014learning}, and LSTM~\citep{hochreiter1997long}. For a fair comparison, embedding sharing is applied for all evaluated methods if not explicitly stated.
The number of epochs is set to 100, where the first 20\% of epochs are used for learning rate warm-up. After that, we linearly decay the learning rate through the remaining epochs.
Following~\citep{bu2023automatic,yang2022normalized}, we normalize the gradients and set the clipping norm $C$ to 1, which eliminates the hyperparameter tuning for clipping norm $C$. For privacy accounting, we fix the total training epochs (iterations) and derive the noise required for each iteration from the preset privacy budget $\varepsilon$. The parameter $\delta$ in DP guarantee is set to $1/\text{size of dataset}$.


\begin{table*}[ht]
\scriptsize
\centering
\caption{Best results (\%) on MovieLens at different privacy levels.}
\begin{tabular}{@{}clcccccc@{}}
\toprule
\multicolumn{2}{c}{DP Guarantee} & \multicolumn{2}{c}{$\varepsilon=5$}   & \multicolumn{2}{c}{$\varepsilon=8$}   & \multicolumn{2}{c}{$\varepsilon=10$} \\ \midrule
\multicolumn{2}{c}{Metric}       & NDCG@10 & \multicolumn{1}{c|}{HIT@10} & NDCG@10 & \multicolumn{1}{c|}{HIT@10} & NDCG@10           & HIT@10           \\ \midrule\midrule
\multicolumn{2}{c}{\textsc{GRU}}  & 2.26 $\pm$ 0.04 & \multicolumn{1}{c|}{4.58 $\pm$ 0.09} & 2.40  $\pm$ 0.03   & \multicolumn{1}{c|}{4.75 $\pm$ 0.20} & 2.81 $\pm$ 0.03 & 5.53 $\pm$ 0.05 \\
\multicolumn{2}{c}{\textsc{LSTM}}   & 2.65 $\pm$ 0.07 & \multicolumn{1}{c|}{5.08 $\pm$ 0.08} & 2.76 $\pm$ 0.03 & \multicolumn{1}{c|}{5.41 $\pm$ 0.06} & 2.95 $\pm$ 0.03 & 5.55 $\pm$ 0.06 \\
\multicolumn{2}{c}{\textsc{Transformer w/o PS}}  & 2.33 $\pm$ 0.05 & \multicolumn{1}{c|}{4.47 $\pm$ 0.07} & 2.56 $\pm$ 0.03    & \multicolumn{1}{c|}{5.11 $\pm$ 0.05} & 2.74 $\pm$ 0.04  & 5.39 $\pm$ 0.08  \\
\multicolumn{2}{c}{\textsc{Transformer (Vanilla)}}  & \textbf{4.57 $\pm$ 0.26} & \multicolumn{1}{c|}{\textbf{8.69 $\pm$ 0.53}} & \textbf{7.05 $\pm$ 0.23}    & \multicolumn{1}{c|}{\textbf{13.17 $\pm$ 0.37}} & \textbf{7.99 $\pm$ 0.21} & \textbf{14.82 $\pm$ 0.38} \\ \midrule \midrule
\multicolumn{2}{c}{\textsc{Ours}} & \textbf{5.88 $\pm$ 0.24}  & \multicolumn{1}{c|}{\textbf{11.13 $\pm$ 0.43}}     &  \textbf{7.70 $\pm$ 0.26} & \multicolumn{1}{c|}{\textbf{14.31 $\pm$ 0.37}}     & \textbf{8.42 $\pm$ 0.22} & \textbf{15.40 $\pm$ 0.32} \\ \midrule
\multicolumn{2}{c}{Relative Improvement} & \textcolor{darkgreen}{\textbf{29\%$\uparrow$}} &  \multicolumn{1}{c|}{\textcolor{darkgreen}{\textbf{28\%$\uparrow$}}}   & \textcolor{darkgreen}{\textbf{9.2\%$\uparrow$}} &  \multicolumn{1}{c|}{\textcolor{darkgreen}{\textbf{8.7\%$\uparrow$}}}     & \textcolor{darkgreen}{\textbf{5.4\%$\uparrow$}} & \textcolor{darkgreen}{\textbf{3.9\%$\uparrow$}} \\
\bottomrule
\end{tabular}
\label{tb:accml}
\end{table*}

\begin{table*}[ht]
\scriptsize
\centering
\vspace{-0.2cm}
\caption{Best results (\%) on Amazon at different privacy levels.}
\begin{tabular}{@{}clcccccc@{}}
\toprule
\multicolumn{2}{c}{DP Guarantee} & \multicolumn{2}{c}{$\varepsilon=5$}   & \multicolumn{2}{c}{$\varepsilon=8$}   & \multicolumn{2}{c}{$\varepsilon=10$} \\ \midrule
\multicolumn{2}{c}{Metric}       & NDCG@10 & \multicolumn{1}{c|}{HIT@10} & NDCG@10 & \multicolumn{1}{c|}{HIT@10} & NDCG@10           & HIT@10           \\ \midrule\midrule
\multicolumn{2}{c}{\textsc{GRU}}  & 1.13 $\pm$ 0.02 & \multicolumn{1}{c|}{2.46 $\pm$ 0.03} & 1.33 $\pm$ 0.02    & \multicolumn{1}{c|}{2.22 $\pm$ 0.02} & 1.47 $\pm$ 0.03 & 2.48 $\pm$ 0.02 \\

\multicolumn{2}{c}{\textsc{LSTM}}   & 1.19 $\pm$ 0.01  & \multicolumn{1}{c|}{2.46 $\pm$ 0.04} & 1.23 $\pm$ 0.01  & \multicolumn{1}{c|}{2.46 $\pm$ 0.04} &  1.34 $\pm$ 0.01 &  2.51 $\pm$ 0.02 \\

\multicolumn{2}{c}{\textsc{Transformer w/o PS}}  & 1.16 $\pm$ 0.01 & \multicolumn{1}{c|}{2.36 $\pm$ 0.01} & 1.20 $\pm$ 0.02    & \multicolumn{1}{c|}{2.38 $\pm$ 0.01} & 1.40 $\pm$ 0.01  & 2.47 $\pm$ 0.02  \\

\multicolumn{2}{c}{\textsc{Transformer (Vanilla)}}  & \textbf{1.37 $\pm$ 0.04} & \multicolumn{1}{c|}{\textbf{2.47 $\pm$ 0.10}} & \textbf{1.54 $\pm$ 0.03}    & \multicolumn{1}{c|}{\textbf{2.77 $\pm$ 0.07}} & \textbf{1.57 $\pm$ 0.03}  & \textbf{2.83 $\pm$ 0.08}  \\ \midrule \midrule

\multicolumn{2}{c}{\textsc{Ours}} & \textbf{1.64 $\pm$ 0.01}  & \multicolumn{1}{c|}{\textbf{3.01 $\pm$ 0.01}}     & \textbf{1.98 $\pm$ 0.05} & \multicolumn{1}{c|}{\textbf{3.70 $\pm$ 0.15}}     & \textbf{1.99 $\pm$ 0.04} & \textbf{3.73 $\pm$ 0.11} \\ \midrule

\multicolumn{2}{c}{Relative Improvement} & \textcolor{shilv}{\textbf{20\%$\uparrow$}} &  \multicolumn{1}{c|}{\textcolor{darkgreen}{\textbf{22\%$\uparrow$}}}   & \textcolor{shilv}{\textbf{28\%$\uparrow$}} &  \multicolumn{1}{c|}{\textcolor{darkgreen}{\textbf{34\%$\uparrow$}}}     & \textcolor{darkgreen}{\textbf{27\%$\uparrow$}} &\textcolor{darkgreen}{\textbf{31\%$\uparrow$}} \\
\bottomrule
\end{tabular}
\label{tb:accam}
\end{table*}

\Cref{tb:accml} and \Cref{tb:accam} show the best\footnote{Strictly speaking, the process of hyperparameter tuning would cost privacy budget~\citep{papernot2022hyperparameter}, but is mainly of theoretical interest. We perform grid search on learning rate $\in\{10^{-3}, 3\times10^{-3}, 5\times10^{-3},  7\times10^{-3},  9\times10^{-3}\}$ and batch size $\in\{256, 512, 1024, 2048, 4096\}$ for each method, ensuring fair comparison.} NDCG@10 and HIT@10 for all the methods on MovieLens and Amazon. 
The vanilla Transformer outperforms all other baselines, reaffirming its dominance in sequential data modeling due to the Attention Mechanism. Our Re-Attention Mechanism further boosts the performance by around 20\% on average. Notably, under a low privacy budget ($\varepsilon=5$), our method achieves a relative improvement of around 25\%, demonstrating its efficacy in attenuating attention distraction during private training. On MovieLens, as expected, the performance gain increases with decreasing privacy budget $\varepsilon$, i.e., increasing noise strength during training. This is because larger noise corresponds to more severe attention distraction, which better highlights the Re-Attention Mechanism's advantage. 
However, on Amazon, our method achieves a smaller relative improvement at $\varepsilon=5$ than at $\varepsilon=10$. We suspect that this is due to the differences of the two datasets in terms of sparsity (i.e., $1-$density in \Cref{fig:dataset}) as well as the inherent hardness of training Transformer~\citep{zhang2019improving,xu2020optimizing,huang2020improving} and the overwhelming DP noise.

\Cref{fig:convergence} shows the model accuracy every five epochs during training. Evidently, the training dynamics of the vanilla Transformer, impacted by attention distraction, can suffer from high variance and/or substantial fluctuation, especially on Amazon. In contrast, our method enjoys faster and smoother convergence, highlighting its training stability under differential privacy.

\begin{figure}[ht]

\centering
\begin{subfigure}{0.24\textwidth}
\centering
\includegraphics[height=2.7cm]{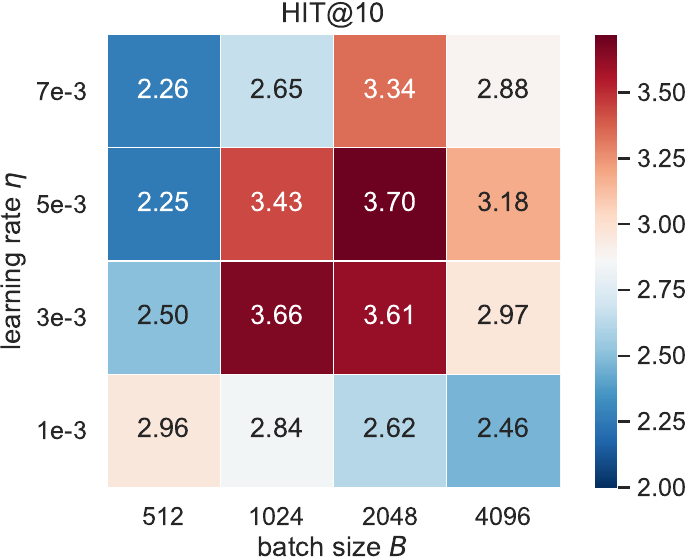}
\caption{Ours}
\label{fig:grad_ndcg_ama_dp}
\end{subfigure}
\begin{subfigure}{0.24\textwidth}
\centering
\includegraphics[height=2.7cm]{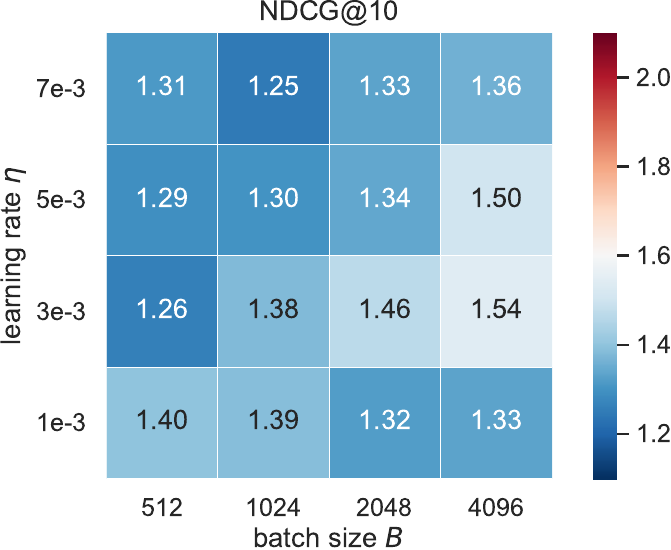}
\caption{Vanilla}
\label{fig:grad_ndcg_ama_va}
\end{subfigure}
\begin{subfigure}{0.24\textwidth}
\centering
\includegraphics[height=2.7cm]{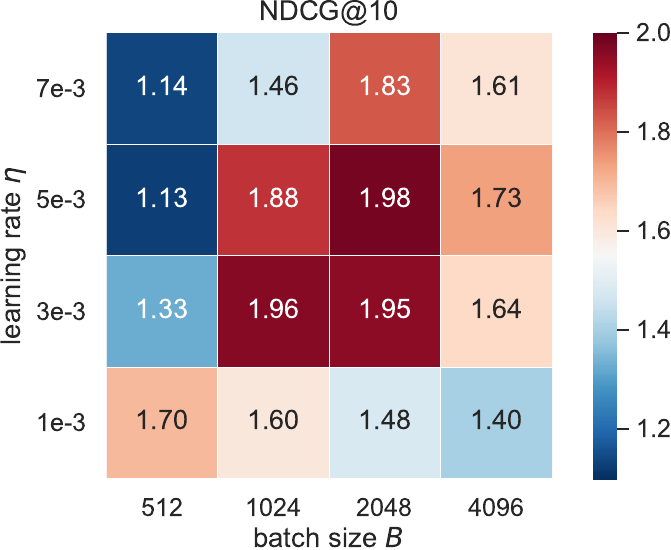}
\caption{Ours}
\label{fig:grad_hit_ama_dp}
\end{subfigure}
\begin{subfigure}{0.24\textwidth}
\centering
\includegraphics[height=2.7cm]{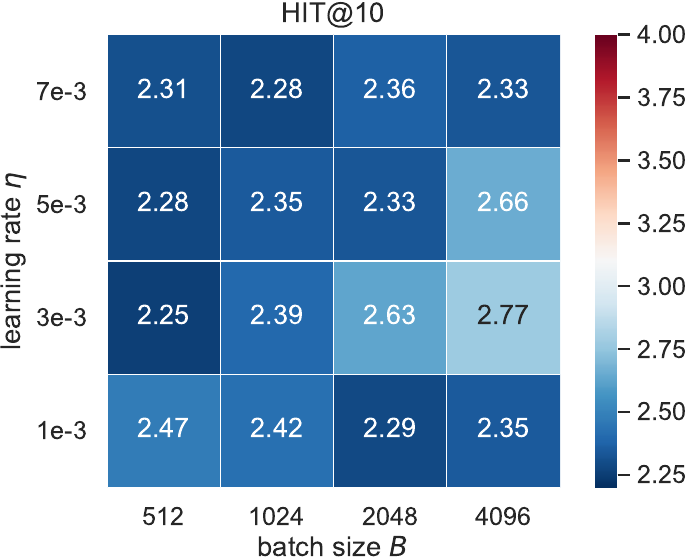}
\caption{Vanilla}
\label{fig:grad_hit_ama_va}
\end{subfigure}
\caption{Results of grid search for hyperparameter tuning on Amazon with privacy budget $\varepsilon=8$.}
\label{fig:gird_exp}
\end{figure}

To study the robustness and sensitivity with respect to the hyperparameters of our method, \Cref{fig:gird_exp} shows the results of hyperparameter tuning via grid search\footnote{Rather than run an additional differentially private algorithm to report a noisy max (or argmax)~\citep{papernot2022hyperparameter}, we opt for this practice of directly displaying all results due to its transparency and comprehensiveness.}. For \textit{reasonable} (along the main diagonal~\citep{tramerdifferentially}) hyperparameter configurations, our method significantly and consistently outperforms the vanilla Transformer.

We present the visualization of the attention score map in \Cref{fig:exp_attn}.
\label{apdx:exp:distraction}
\begin{figure}[ht!]
\centering

\includegraphics[width=12cm]{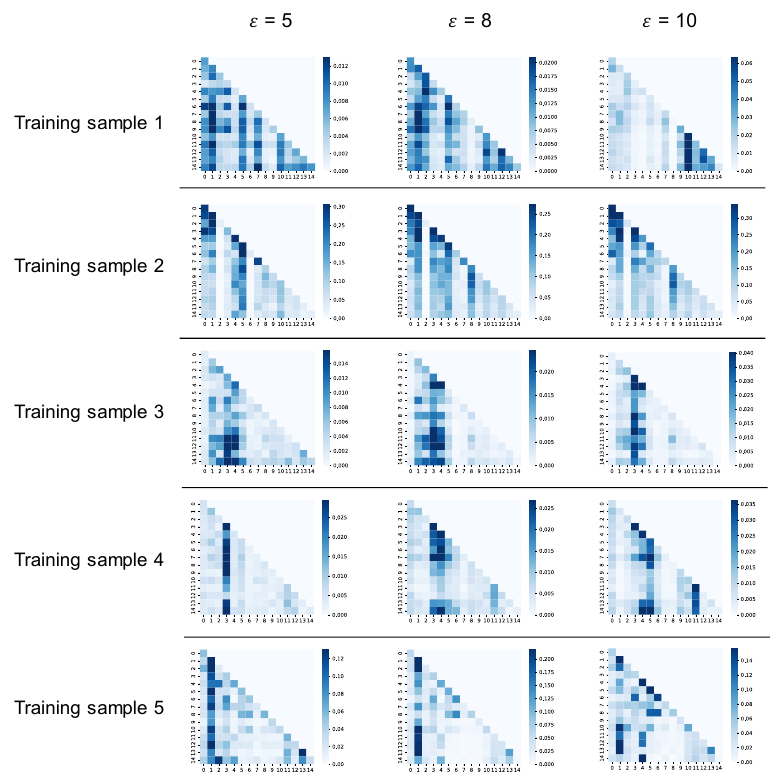}

\caption{Visualization of attention score with varying $\varepsilon$ on MovieLens for five training samples.}
\label{fig:exp_attn}
\end{figure}

We randomly draw five sentences from the dataset MovieLens and visualize their attention matrices with $\varepsilon=5, 8, 10$ (\Cref{fig:exp_attn}). Recall that as the value of $\varepsilon$
increases (indicating a lesser amount of noise), the accuracy of the attention score improves (and consequently, the model performance) will be. With a larger $\varepsilon$ (i.e., 10), the attention score distribution more closely aligns with the ground truth.
It is clear from \Cref{fig:exp_attn} that due to the attention distraction caused by DP noise, when $\varepsilon$ is low, some tokens receive a larger amount of attention than they should have (compared with $\varepsilon=10$), thereby resulting in suboptimal model performance.

\subsection{Additional Experiments on NLP Task}
The main focus of this paper is to, in the context of the introduced modular treatment, initiate a systematic study on the reduction from DP Transformer to (vanilla) DP neural nets (Figure 1). The experiments in the work intend to corroborate the theoretical analysis empirically.

However, when it comes to the natural language processing (NLP) tasks, it inevitably involves the use of large models and potentially more reasonable ways to exploit public data while maintaining provable privacy, which itself is already non-trivial and largely open. Since the focus of this work is on initiating a systematic study on DP Transformers (as a basic primitive rather than a concrete application), we leave that more complicated scenario (and also on various domains) for future work. With that being said, we conducted an experiment of language modeling on TinyShakespeare\footnote{https://paperswithcode.com/dataset/tinyshakespeare} dataset, where a Transformer model is trained from scratch. 

We use the standard Transformer encoder described in~\citep{vaswani2017attention}. The model dimension is set to 384. The number of heads in the Attention Mechanism is set to 6. The number of Transformer blocks is set to 6.
The number of epochs is set to 2000. The batch size is chosen from $\{256, 512, 1024, 2048, 4096\}$. The learning rate is chosen from $\{10^{-5}, \times10^{-4}, 5\times10^{-4}, 10^{-3}\}$. The dropout rate is 0.2. We use the Adam optimizer.

\begin{figure}[ht!]
\centering

\begin{subfigure}{0.4\textwidth}
\centering
\setlength{\abovecaptionskip}{0.cm}
\includegraphics[width=7cm]{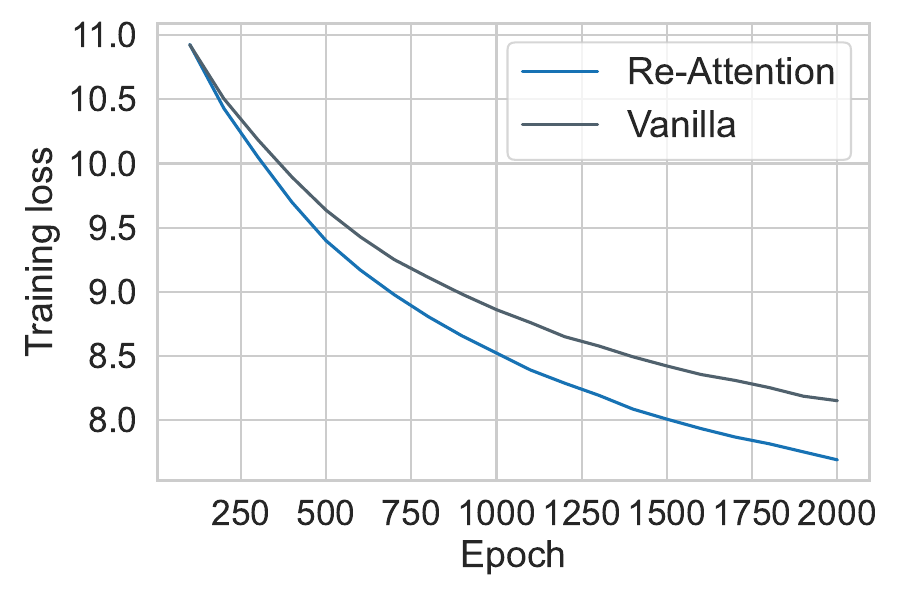}
\caption{Training loss ($\varepsilon=10$)}
\end{subfigure}
\begin{subfigure}{0.4\textwidth}
\centering
\setlength{\abovecaptionskip}{0.cm}
\includegraphics[width=7cm]{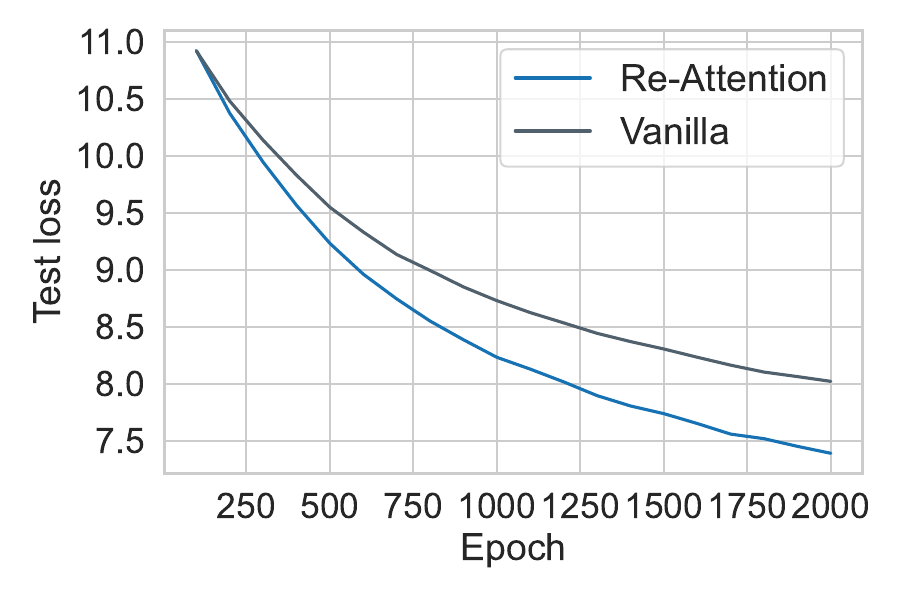}
\caption{Test loss ($\varepsilon=10$)}
\end{subfigure}

\begin{subfigure}{0.4\textwidth}
\centering
\setlength{\abovecaptionskip}{0.cm}
\includegraphics[width=7cm]{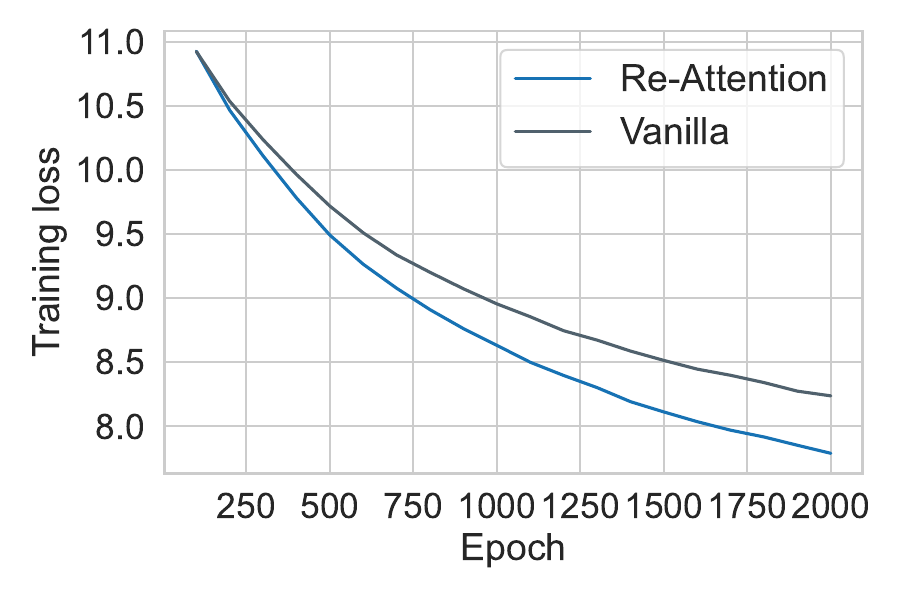}
\caption{Training loss ($\varepsilon=8$)}
\end{subfigure}
\begin{subfigure}{0.4\textwidth}
\centering
\setlength{\abovecaptionskip}{0.cm}
\includegraphics[width=7cm]{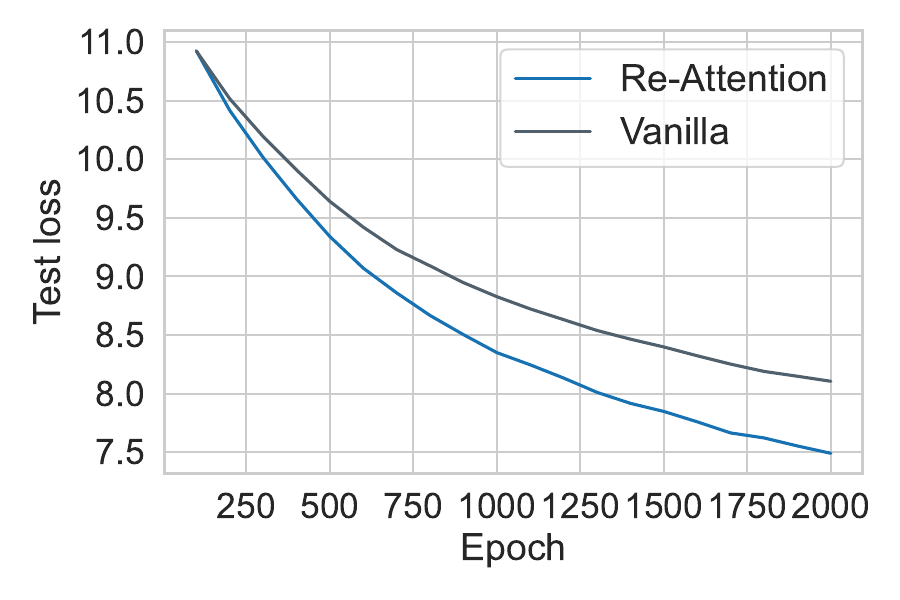}
\caption{Test loss ($\varepsilon=8$)}
\end{subfigure}

\begin{subfigure}{0.4\textwidth}
\centering
\setlength{\abovecaptionskip}{0.cm}
\includegraphics[width=7cm]{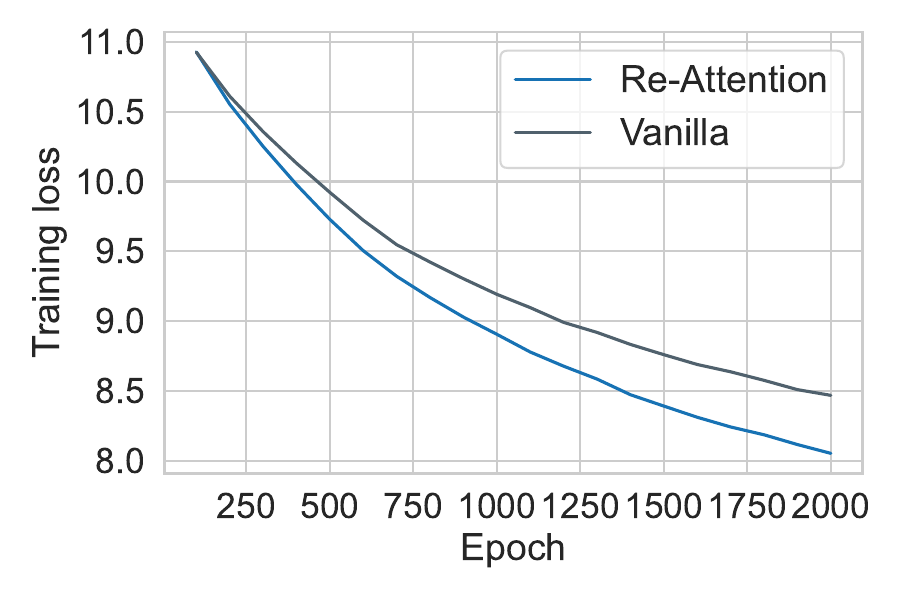}
\caption{Training loss ($\varepsilon=5$)}
\end{subfigure}
\begin{subfigure}{0.4\textwidth}
\centering
\setlength{\abovecaptionskip}{0.cm}
\includegraphics[width=7cm]{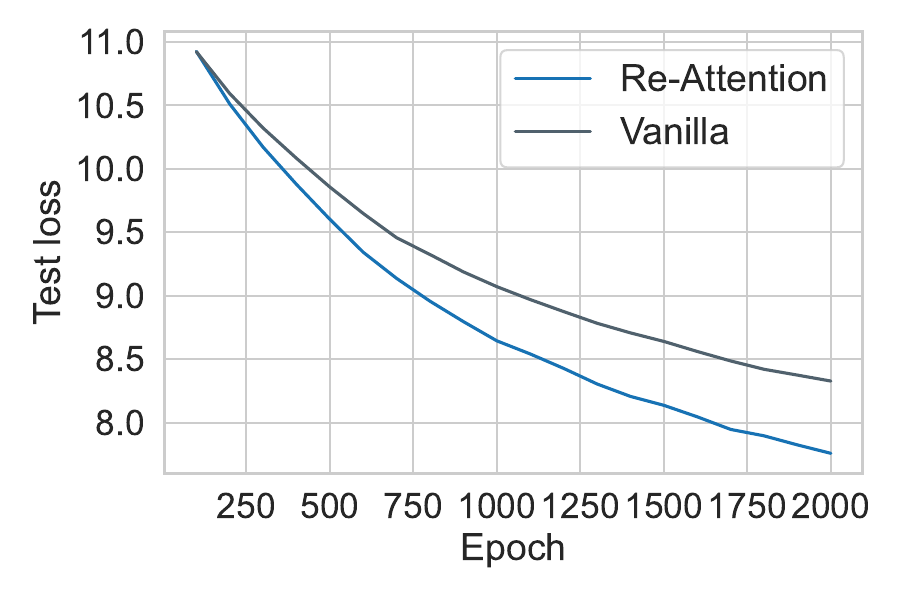}
\caption{Test loss ($\varepsilon=5$)}
\end{subfigure}

\label{fig:nlp}
\end{figure}

\end{document}